\renewcommand{\fnum@algorithm}{\fname@algorithm}
\DeclareSymbolFont{AMSb}{U}{msb}{m}{n}
\DeclareMathSymbol{\N}{\mathbin}{AMSb}{"4E}
\DeclareMathSymbol{\Z}{\mathbin}{AMSb}{"5A}
\DeclareMathSymbol{\R}{\mathbin}{AMSb}{"52}
\DeclareMathSymbol{\Q}{\mathbin}{AMSb}{"51}
\DeclareMathSymbol{\erert}{\mathbin}{AMSb}{"50}
\DeclareMathSymbol{\I}{\mathbin}{AMSb}{"49}
\DeclareMathSymbol{\C}{\mathbin}{AMSb}{"43}
\newcommand{\mynote}[2]{{\textcolor{#1}{ #2}}}
\definecolor{gray}{gray}{0.4}
\newcommand{\gray}[1]{\mynote{gray}{{\footnotesize #1}}}
\newcommand{\remove}[1]{}
\newtheorem{theorem}{Theorem}[section]
\newtheorem{lemma}[theorem]{Lemma}
\newtheorem{definition}[theorem]{Definition}
\newtheorem{remark}[theorem]{Remark}
\newtheorem{proposition}[theorem]{Proposition}
\newtheorem{claim}[theorem]{Claim}
\newtheorem{observation}[theorem]{Observation}
\newtheorem{example}[theorem]{Example}
\newcommand{\1}{\mathbbm{1}}
\newcommand{\AAA}{\mathcal A}
\newcommand{\DDD}{\mathcal D}
\newcommand{\eps}{\varepsilon}
\newcommand{\error}{{\rm error}}
\newcommand{\Lap}{\operatorname{\rm Lap}}
\newcommand{\VC}{\operatorname{\rm VC}}
\newcommand{\poly}{\mathop{\rm poly}}
\newcommand{\halfplane}{\operatorname*{\tt HALFPLANE}\nolimits}
\newcommand{\kgon}{\operatorname*{\tt k-GON}\nolimits}
\newcommand{\kuniongon}{\operatorname*{\tt k-UNION-GON}\nolimits}
\newcommand{\kigon}{\operatorname*{\tt k_1-GON}\nolimits}
\newcommand{\kmgon}{\operatorname*{\tt k_m-GON}\nolimits}
\newcommand{\triang}{\operatorname*{\tt TRIANGLE}\nolimits}
\newcommand{\interior}{\operatorname*{\tt interior}}
\newcommand{\ckgon}{\operatorname*{\tt CONVEX-k-GON}\nolimits}
\def\Q{\operatorname*{\mathbb{Q}}}
\def\poly{\mathop{\rm{poly}}\nolimits}
\def\Lap{\mathop{\rm{Lap}}\nolimits}
\def\disj{\mathop{\rm{DISJ}}\nolimits}
\def\conj{\mathop{\rm{CONJ}}\nolimits}
\newcommand{\thickhline}{%
    \noalign {\ifnum 0=`}\fi \hrule height 1pt
    \futurelet \reserved@a \@xhline
}
\newcolumntype{"}{@{\hskip\tabcolsep\vrule width 1pt\hskip\tabcolsep}}
\newlength{\fboxhsep}
\newlength{\fboxvsep}
\newlength{\fboxtoprule}
\newlength{\fboxbottomrule}
\newlength{\fboxleftrule}
\newlength{\fboxrightrule}
\def\@frameb@xother#1{%
  \@tempdima\fboxtoprule
  \advance\@tempdima\fboxvsep
  \advance\@tempdima\dp\@tempboxa
  \hbox{%
    \lower\@tempdima\hbox{%
      \vbox{%
        \hrule\@height\fboxtoprule
        \hbox{%
          \vrule\@width\fboxleftrule
          #1%
          \vbox{%
            \vskip\fboxvsep
            \box\@tempboxa
            \vskip\fboxvsep}%
          #1%
          \vrule\@width\fboxrightrule}%
        \hrule\@height\fboxbottomrule}%
    }%
  }%
}
\long\def\fboxother#1{%
  \leavevmode
  \setbox\@tempboxa\hbox{%
    \color@begingroup
    \kern\fboxhsep{#1}\kern\fboxhsep
    \color@endgroup}%
  \@frameb@xother\relax}
\begin{document}

\title{Differentially Private Learning of Geometric Concepts}
\author{
Haim Kaplan\thanks{Tel Aviv University and Google.}
\and
Yishay Mansour\footnotemark[1]
\and
Yossi Matias\thanks{Google.}
\and
Uri Stemmer\thanks{Ben-Gurion University. Supported by a gift from Google Ltd.}
}

\date{February 13, 2019}
\maketitle

\begin{abstract}
We present differentially private efficient algorithms for learning union of polygons in the plane (which are not necessarily convex). Our algorithms achieve $(\alpha,\beta)$-PAC learning and $(\eps,\delta)$-differential privacy using a sample of size $\tilde{O}\left(\frac{1}{\alpha\eps}k\log d\right)$, where the domain is $[d]\times[d]$ and $k$ is the number of edges in the union of polygons.
\end{abstract}

\section{Introduction}

Machine learning algorithms have exciting and wide-range potential. However, as the data frequently contain sensitive personal information, there are real privacy concerns associated with the development and the deployment of this technology. 
Motivated by this observation, the line of work on {\em differentially private learning} (initiated by \cite{KLNRS08}) aims to construct learning algorithms that provide strong (mathematically proven) privacy protections for the training data. Both government agencies and industrial companies have realized the importance of introducing strong privacy protection to statistical and machine learning tasks. A few recent examples include Google~\cite{erlingsson2014rappor} and Apple~\cite{thakurta2017learning} that are already using differentially private estimation algorithms that feed into machine learning algorithms, and the US Census Bureau announcement that they will use differentially private data publication techniques in the next decennial census~\cite{USCensus}. Differential privacy is increasingly accepted as a standard for rigorous privacy. We refer the reader to the excellent surveys in~\cite{DR14} and~\cite{VadhanTutorial2016}. The definition of differential privacy is,

\begin{definition}[\cite{DMNS06}]
Let $\AAA$ be a randomized algorithm whose input is a sample. Algorithm $\AAA$ is $(\eps,\delta)$-{\em differentially private} if for every two samples $S,S'$ that differ in one example, and for any event $T$, we have 
$$\Pr[\AAA(S)\in T]\leq e^{\eps}\cdot \Pr[\AAA(S')\in T]+\delta.$$
\end{definition}

For now, we can think of a (non-private) learner as an algorithm that operates on a set of classified random examples, and outputs a hypothesis $h$ that misclassifies fresh examples with probability at most (say) $\tfrac{1}{10}$. A {\em private learner} must achieve the same goal while guaranteeing that the choice of $h$ preserves differential privacy of the sample points. Intuitively this means that the choice of $h$ should not be significantly affected by any particular sample.

While many learning tasks of interest are compatible with differential privacy, privacy comes with a cost (in terms of computational resources and the amount of data needed), and it is important to understand how efficient can private learning be. Indeed, there has been a significant amount of work aimed at understanding the sample complexity of private learning~\cite{BKN10,CH11,BNS13,BNS13b,FX14,BNSV15,AlonLiMaMo18}, the computational complexity of private learning~\cite{BunZ16}, and studying variations of the private learning model~\cite{BNS15,BNS16,DworkF18,BassilyTT18}. However, in spite of the significant progress made in recent years, much remains unknown and answers to fundamental questions are still missing. In particular, the literature lacks effective constructions of private learners for specific concept classes of interest, such as halfspaces, polynomials, $d$-dimensional balls, and more. We remark that, in principle, every (non-private) learner that works in the {\em statistical queries (SQ)} model of Kearns~\cite{Kearns98} can be transformed to preserve differential privacy. However, as the transformation is only tight up to polynomial factors, and as SQ learners are often much less efficient than their PAC learners counterparts, the resulting private learners are typically far from practical.

In this work we make an important step towards bringing differentially private learning closer to practice, and construct an effective algorithm for privately learning simple geometric shapes, focusing on polygons in the plane. To motivate our work, consider the task of 
analyzing GPS navigation data, or the task of learning the shape of a flood or a fire based on users' location reports. 
As user location data might be sensitive, the ability to {\em privately learn} such shapes 
 is of significant importance.

\subsection{A Non-Private Learner for Conjunctions and Existing Techniques}
Our learner is obtained by designing a private variant for the classical (non-private) learner for conjunctions using the greedy algorithm for set-cover. Before describing our new learner, we first quickly recall this non-private technique (see e.g.,~\cite{Kearns94} for more details).

Let $\conj_{k,d}$ denote the class of all conjunctions (i.e., {\rm AND}) of at most $k$ literals over $d$ Boolean variables $v_1,\dots,v_d$, e.g., $v_1\wedge\overline{v}_4\wedge v_5$. Here, for a labeled example $(\vec{x},\sigma)$, the vector $\vec{x}\in\{0,1\}^d$ is interpreted as an assignment to the $d$ Boolean variables, and $\sigma=1$ iff this assignment satisfies the target concept.
Given a sample $S=\{(\vec{x}_i,\sigma_i)\}$ of labeled examples, the classical (non-private) learner for this class begins with the hypothesis $h=v_1\wedge\overline{v}_1\wedge\dots\wedge v_d\wedge\overline{v}_d$, and then proceeds by deleting from $h$ any literal that ``contradicts'' a positively labeled example in $S$. Observe that at the end of this process, the set of literals appearing in $h$ contains the set of literals appearing in the target concept (because a literal is only deleted when it is contradicted by a positively labeled example).

The next step is to eliminate unnecessary literals from the hypothesis $h$ (in order to guarantee generalization). Note that removing literals from $h$ might cause it to err on negative example in $S$, and hence, the algorithm must carefully choose which of the literals to eliminate. This can be done using the greedy algorithm for set cover as follows. We have already made sure that each of the literals in $h$ does not err on positive examples in $S$ (since such literals were deleted), and we know that there is a choice of $k$ literals from $h$ that together correctly classify all negative examples in $S$ (since we know that the $k$ literals of the target concept are contained in $h$). Thus, our task can be restated as identifying a small number of literals from $h$ that together correctly classify all negative examples in $S$. This can be done using the greedy algorithm for set cover, where every literal in $h$ corresponds to a set, and this set ``covers'' a negative example if the literal is zero on this example.

To summarize, the algorithm first identifies the collection of all literals that are consistent with the positive data, and then uses the greedy algorithm for set cover in order to identify a small subset of these literals that together correctly classify the negative data. This is a good starting point for designing a private learner for conjunctions, since the greedy algorithm for set cover has a private variant~\cite{GuptaLMRT10}. 
The challenge here is that in the private algorithm of Gupta et al.~\cite{GuptaLMRT10}, the collection of sets from which the cover is chosen is assumed to be fixed and independent of the input data. In contrast, in our case the collection of sets corresponds to the literals that correctly classified the positive data, which is data dependent. One might try to overcome this challenge by first identifying, in a private manner, a collection $L$ of all literals that correctly classify (most of) the positive data, and then to run the private algorithm of Gupta et al.~\cite{GuptaLMRT10} to choose a small number of literals from $L$. However, a direct implementation of this idea would require accounting for the privacy loss incurred due to each literal in $L$. As $|L|$ can be linear in $d$ (the number of possible literals), this would result in an algorithm with sample complexity $\poly(d)$. When we apply this strategy to learn polygons in the plane, $d$ will correspond to the size of an assumed grid on the plane, which we think of as being very big, e.g., $d=2^{64}$. Hence, $\poly(d)$ sample complexity is unacceptable.

\subsection{Our Results}

Our first result is an efficient private learner for conjunctions. Our learner is obtained by modifying the strategy outlined above to use the greedy algorithm for set cover in order to choose a small number of literals directly out of the set of all possible $2d$ literals (instead of choosing them out of the set of literals that agree with the positive examples). However, this must be done carefully, as unlike before, we need to ensure that the selected literals will not cause our hypothesis to err on the positive examples. Specifically, in every step of the greedy algorithm we will aim at choosing a literal that eliminates (i.e., evaluates to zero on) a lot of the negative examples without eliminating (essentially) any of the positive examples. In the terminology of the set cover problem, we will have {\em two} types of elements -- positive and negative elements -- and our goal will be to identify a small cover of the negative elements that does not cover (essentially) any of the positive elements. We show,

\begin{theorem}
There exists an efficient $(\eps,\delta)$-differentially private $(\alpha,\beta)$-PAC learner for $\conj_{k,d}$ with sample complexity\footnote{For simplicity we used the $\tilde{O}$ notation to hide logarithmic factors in $\alpha,\beta,\epsilon,\delta,k$. The dependency in these factors wil be made explicit in the following sections.} $\tilde{O}(\frac{1}{\alpha\eps}k\log d)$.
\end{theorem}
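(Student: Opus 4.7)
I would construct a private variant of the classical greedy set-cover learner for conjunctions by selecting literals directly out of the universe of all $2d$ literals instead of first restricting to those consistent with the positive examples. Given a realizable labeled sample $S = S^+\cup S^-$, the algorithm iterates for $T = O(k\log(1/\alpha))$ rounds. In round $t$, let $N_t\subseteq S^-$ denote the negative examples not yet eliminated by a previously chosen literal, and assign each literal $\ell\in\{v_1,\overline{v}_1,\ldots,v_d,\overline{v}_d\}$ the quality
\[
\qual(\ell) \;=\; \bigl|\{(\vec x,-)\in N_t:\ell(\vec x)=0\}\bigr| \;-\; \lambda\cdot\bigl|\{(\vec x,+)\in S^+:\ell(\vec x)=0\}\bigr|,
\]
where $\lambda$ is a penalty (of order $T/\alpha$) that discourages literals which evaluate to zero on positive examples. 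The next literal is drawn by the exponential mechanism with parameter $\eps_0$ on $\qual$ (whose one-example sensitivity is $\lambda+1$). The output hypothesis is the conjunction of the $T$ chosen literals.

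Privacy follows by composition: each round is $\eps_0$-differentially private, and advanced composition of $T$ rounds with $\eps_0 = \Theta(\eps/\sqrt{T\log(1/\delta)})$ yields overall $(\eps,\delta)$-DP. The utility analysis rests on two observations. First, the target conjunction consists of $k$ literals that are consistent with every positive example; hence in every round some literal has quality at least $|N_t|/k$ and harms no positives. Second, by the standard exponential-mechanism guarantee, the drawn literal has quality within an additive $O(\lambda\log(d)/\eps_0)$ of the optimum, with high probability. A short inductive argument then shows that $|N_t|$ shrinks by a $(1-\Omega(1/k))$ factor per round as long as $|N_t|$ dominates the mechanism's additive error, so $|N_T|\le \alpha n/2$ after $T = O(k\log(1/\alpha))$ rounds; meanwhile, the penalty ensures that the total number of positives eliminated is also at most $\alpha n/2$. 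A standard uniform-convergence bound based on $\VC(\conj_{k,d}) = O(k\log d)$ then lifts the empirical error bound to the underlying distribution, giving $(\alpha,\beta)$-PAC learning.

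\textbf{Main obstacle.} The main obstacle is the joint calibration of $\lambda$, $\eps_0$, and the sample size $n$ to obtain the claimed $\tilde{O}(k\log d/(\alpha\eps))$ bound rather than a polynomially worse one. Increasing $\lambda$ better protects positives but inflates the sensitivity of $\qual$ and therefore the exponential mechanism's additive error, while composition of $T$ noisy selections drives $\eps_0$ down and further degrades that error. A naive implementation of the above template yields sample complexity polynomial in $k/\alpha$, so obtaining essentially linear dependence in $k$ requires additional care -- for instance, exploiting that in each round only a small fraction of positives can be hurt (so that the effective sensitivity one pays is much smaller than $\lambda$), that the set of ``plausibly helpful'' literals shrinks rapidly across rounds, or replacing the naive penalty formulation with a constraint-based selection (\eg using sparse-vector techniques to prune literals inconsistent with $S^+$ almost for free) so that the per-round privacy cost scales with $\log d$ rather than $\log d$ times a large sensitivity factor.
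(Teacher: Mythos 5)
Your high-level plan matches the paper's: greedily pick literals from all $2d$ candidates via the exponential mechanism, with a quality function that rewards covering negatives and penalizes covering positives, then invoke a VC bound. But you correctly flag, and do not resolve, the central obstacle: with advanced composition over $T\approx k\log(1/\alpha)$ rounds you are forced to set $\eps_0\approx\eps/\sqrt{T}$, which (together with your $\lambda$-weighted penalty having sensitivity $\lambda+1$) gives sample complexity $\tilde O(k^{1.5}\log d/(\alpha\eps))$ or worse rather than the claimed $\tilde O(k\log d/(\alpha\eps))$. The candidate fixes you list (sparse vector, effective sensitivity arguments) are not what the paper does, and none of them is fleshed out enough to close the gap.

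The paper's resolution has two ingredients you are missing. First, the quality function is engineered to have sensitivity $1$, not $\lambda+1$: the paper sets $q(h)=\min\bigl\{\#_{h\to 0}(S^0)-b_j/k,\ -\#_{h\to 0}(S^1)\bigr\}$, where $b_j$ is a Laplace-noised estimate of $|S^0|$. Taking the minimum (instead of a weighted sum) and dividing the noisy threshold by $k$ keeps the worst-case change under a single-record swap at $1$, while still certifying both that the chosen literal kills a $1/k$ fraction of remaining negatives and that it kills essentially no positives (since any literal of the target has both $\#_{h\to 0}(S^1)=0$ and $\#_{h\to 0}(S^0)\ge |S^0|/k$, hence $q\ge 0$). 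Second, and more importantly, the paper does \emph{not} pay for $T$ rounds via composition. It reuses the amortized privacy argument of Gupta et al.\ for private set cover: once a chosen literal evaluates to $0$ on the distinguishing point $x^*$, that point is deleted and the two neighboring executions coincide thereafter. Conditioning on the per-round noises (and coupling the noise sequences between $S$ and $S'$ so that $b_j$ is unchanged when $\sigma^*=0$), the privacy-loss ratio factors as $\prod_j\bigl(1+(e^{\hat\eps}-1)p_j\bigr)$, where $p_j$ is the conditional probability of picking a literal that zeroes $x^*$ in round $j$. If the $p_j$ are small the product is small; if they are large, $x^*$ is covered early and the product has few surviving factors. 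Lemma~\ref{lemma:GuptaRandom} formalizes this trade-off and yields $(\eps,\delta)$-DP with $\hat\eps=\Theta(\eps/\log(1/\delta))$ \emph{per round, independent of $k$}, which is exactly where the $\sqrt{k}$ saving comes from. Without this amortized analysis and the sensitivity-$1$ reformulation, your construction cannot reach the stated bound.
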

We remark that our techniques extend to disjunctions of literals in a straightforward way, as follows.
\begin{theorem}
There exists an efficient $(\eps,\delta)$-differentially private $(\alpha,\beta)$-PAC learner for
the class of all disjunctions (i.e., OR) of at most $k$ literals over $d$ Boolean variables with sample complexity $\tilde{O}(\frac{1}{\alpha\eps}k\log d)$.
\end{theorem}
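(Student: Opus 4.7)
The plan is to reduce private learning of $k$-term disjunctions to private learning of $k$-term conjunctions in a black-box fashion, using De~Morgan's laws. Concretely, any disjunction $c(\vec{x}) = \ell_1 \vee \cdots \vee \ell_k$ of literals satisfies $\neg c(\vec{x}) = \bar{\ell}_1 \wedge \cdots \wedge \bar{\ell}_k$, which is an element of $\conj_{k,d}$ on the same $d$ variables. Thus the class of $k$-term disjunctions is obtained from $\conj_{k,d}$ simply by negating each concept, and an analogous bijection acts on hypotheses.

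Given the learner $\AAA$ promised by the previous theorem, I would define a learner $\AAA'$ for disjunctions as follows. On input $S = \{(\vec{x}_i,\sigma_i)\}_{i=1}^n$, the learner $\AAA'$ flips all the labels to form $\tilde{S} = \{(\vec{x}_i,\, 1-\sigma_i)\}_{i=1}^n$, runs $h \gets \AAA(\tilde{S})$ to obtain a conjunction hypothesis, and outputs the hypothesis $\tilde{h}(\vec{x}) := 1 - h(\vec{x})$, which is representable as a disjunction.

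For privacy, note that the label-flipping map $(\vec{x},\sigma)\mapsto(\vec{x},1-\sigma)$ is applied pointwise and is independent of the dataset, so any two neighboring samples $S \sim S'$ produce neighboring samples $\tilde{S}\sim\tilde{S}'$. Hence $\AAA'$ is $(\eps,\delta)$-differentially private by the privacy guarantee of $\AAA$ together with closure of differential privacy under post-processing (the deterministic negation $h\mapsto 1-h$). For accuracy, if the target disjunction is $c$ and examples are drawn i.i.d.\ from $\DDD$, then $\tilde{S}$ is distributed as an i.i.d.\ sample from $\DDD$ labeled by the conjunction $\neg c\in\conj_{k,d}$. The guarantee of $\AAA$ gives that, with probability at least $1-\beta$, the returned $h$ satisfies $\Pr_{\vec{x}\sim\DDD}[h(\vec{x})\neq \neg c(\vec{x})]\leq\alpha$, which is exactly $\Pr_{\vec{x}\sim\DDD}[\tilde{h}(\vec{x})\neq c(\vec{x})]\leq\alpha$.

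There is essentially no obstacle here: the reduction is syntactic and black-box, so the sample complexity, computational efficiency, privacy parameters, and PAC parameters are all preserved, yielding the claimed $\tilde{O}\bigl(\frac{1}{\alpha\eps}k\log d\bigr)$ bound.
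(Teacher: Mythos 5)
Your black-box reduction via De Morgan's laws is correct: the label-flip $(\vec{x},\sigma)\mapsto(\vec{x},1-\sigma)$ is a per-entry, data-independent map so it sends neighboring samples to neighboring samples, and composing the $\conj_{k,d}$ learner with pre-processing by label-flip and post-processing by $h\mapsto 1-h$ preserves $(\eps,\delta)$-differential privacy; and since the complement of a literal is again a literal, $\neg\disj_{k,d}=\conj_{k,d}$ exactly, so the accuracy, efficiency, and sample complexity all transfer with no loss.

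This is a genuinely different route from the paper's. The paper instead says that \texttt{SetCoverLearner} itself ``extends to disjunctions in a straightforward way,'' i.e., it constructs the disjunction learner directly by dualizing the roles of $S^0$ and $S^1$ inside the algorithm (greedily selecting hypotheses from $H$ that label many remaining \emph{positive} examples as $1$ while mislabeling essentially no negatives), rather than by reduction. Your reduction is cleaner and makes the theorem an immediate corollary, but it is also more limited in scope: it works here only because the class of single literals is closed under complement. The paper's choice to build a native disjunction learner is what it actually needs downstream, since in Section 5 it must learn $(\halfplane^{\wedge 3})^{\vee k}$, and negating that class would produce $(\halfplane^{\vee 3})^{\wedge k}$ --- a conjunction over a \emph{different} base class --- so the De Morgan reduction would not carry over to that setting. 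For the specific theorem you were asked to prove, though, either argument suffices and yours is perfectly valid.
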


We then show that our technique can be used to privately learn (not necessarily convex) polygons in the plane. 
To see the connection, let us first consider {\em convex} polygons, and observe that a {\em convex} polygon with $k$ edges can be represented as the intersection of $k$ halfplanes. Thus, as in our learner for Boolean conjunctions, if we could identify a halfplane that eliminates (i.e., evaluates to zero on) a lot of the negative examples without eliminating (essentially) any of the positive examples in the sample, then we could learn convex polygons in iterations. However, recall that in our learner for conjunctions, the parameter $d$ controlled both the description length of examples (since each example specified an assignment to $d$ variables) and the number of possible literals (which was $2d$). Thus, the running time of our algorithm was allowed to be linear in the number of possible literals. The main challenge when applying this technique to (convex) polygons is that the number of possible halfplanes (which will correspond to the parameter $d$) is huge, and our algorithm cannot run in time linear in the number of possible halfplanes.

To recover from this difficulty, we consider the dual plane in which sample points correspond to lines, and show that in that dual plane it is possible to (privately) identify a point (that corresponds to a halfplane in the primal plane) with the required properties (that is, eliminating a lot of negative examples while not eliminating positive examples). The idea is that since there are only $n$ input points, in the dual plane there will be only $n$ lines to consider, which partition the dual plane into at most $n^2$ regions. Now, two different halfplanes in the primal plane correspond to two different points in the dual plane, and if these two points fall in the same region, then the two halfplanes behave identically on the input sample. We could therefore partition the halfplanes (in the primal plane) into at most $n^2$ equivalence classes w.r.t.\ the way they behave on the input sample. This fact can be leveraged in order to efficiently implement the algorithm.

Our techniques extend to non-convex polygons, which unlike convex polygons cannot be represented as intersection of halfplanes. It it well known that every (simple\footnote{A {\em simple} polygon is one which does not intersect itself.}) polygon with $k$ edges can be represented as the union of at most $k$ triangles, each of which can be represented as the intersection of at most 3 halfplanes (as a triangle is a convex polygon with 3 edges). In other words, a (simple) polygon with $k$ edges can be represented as a DNF formula (i.e., disjunction of conjunctive clauses) in which each clause has at most 3 literals. As we will see, our techniques can be extended to capture this case efficiently. Our main theorem is the following.

\begin{theorem}[informal]
There exists an efficient $(\eps,\delta)$-differentially private $(\alpha,\beta)$-PAC learner for union of (simple) polygons in the plane with sample complexity $\tilde{O}(\frac{1}{\alpha\eps}k\log d)$, where $k$ is the number of edges in the union of polygons and $\log d$ is the description length of examples.
\end{theorem}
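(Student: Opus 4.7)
The plan is to reduce the problem to the previously established tools in the paper: learning a 3-DNF over halfplane literals. First I would invoke the standard fact that any simple polygon with $k$ edges admits a triangulation into $k-2$ triangles, and hence a union of simple polygons with $k$ edges in total can be expressed as a union of $O(k)$ triangles. Since every triangle is the intersection of $3$ halfplanes, the target concept is an $O(k)$-term DNF in which each clause is a conjunction of at most $3$ halfplane literals drawn from the (large) universe of halfplanes defined on the grid $[d]\times[d]$. This reduces the problem to privately learning a $3$-DNF of size $k$ over halfplane literals, in analogy with the private disjunction learner promised by the preceding theorem.

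Next I would adapt the private greedy construction underlying the conjunction/disjunction learner. As in that algorithm, we iteratively select a clause that covers many positive examples in the sample while covering essentially no negative examples, add it to a growing disjunction, remove the newly covered positives, and repeat for $O(k)$ rounds. The selection step is performed privately using the exponential mechanism with a score that counts (uncovered) positives captured by a candidate triangle, and that severely penalizes any triangle that captures a negative example. Standard composition and generalization arguments (the same ones powering the disjunction theorem) then give a total sample complexity of $\tilde O\!\left(\tfrac{1}{\alpha\eps}\,k\,\log D\right)$, where $D$ is the size of the literal universe the selection mechanism actually ranges over.

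The key obstacle, as the authors already flag, is that the literal universe---halfplanes in $[d]\times[d]$---is enormous, so a naive implementation would give both runtime and an effective $\log D$ polynomial in $d$ rather than logarithmic. To overcome this I would use the projective duality trick exactly as sketched: map each sample point to a line in the dual plane; the arrangement of these $n$ lines partitions the dual plane into $O(n^2)$ faces, and halfplanes whose dual points lie in the same face behave identically on the sample. This yields only $O(n^2)$ equivalence classes of halfplanes, hence $O(n^6)$ equivalence classes of triangles, which is the true size of the candidate set fed to the exponential mechanism. Both the runtime and the $\log D$ factor inside the exponential mechanism's utility bound then become $\poly(n,\log d)$, and absorbing the $\log n$ pieces into the $\tilde O(\cdot)$ leaves a $\log d$ dependence as claimed. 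One must be slightly careful that the representatives chosen from each cell do not leak information beyond what the mechanism accounts for; this is handled by fixing a canonical representative per face before looking at the labels, so that the utility score is the only data-dependent quantity.

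Finally I would stitch things together with a uniform-convergence/generalization argument for the class of $k$-term $3$-DNFs over halfplane literals, whose VC dimension is $\tilde O(k)$, to show that an empirically good disjunction of triangles generalizes with the stated accuracy $\alpha$ and confidence $\beta$. Composing the privacy losses across the $O(k)$ greedy rounds via advanced composition, and setting parameters so each round loses $\eps/\sqrt{k\log(1/\delta)}$, recovers the target $(\eps,\delta)$ budget at an overall sample complexity of $\tilde O(\tfrac{1}{\alpha\eps} k \log d)$, which matches the statement of the theorem.
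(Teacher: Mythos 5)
Your high-level plan matches the paper's: triangulate into $O(k)$ triangles, view each triangle as a conjunction of $3$ halfplane literals so that the target is an $O(k)$-term $3$-DNF, run a private greedy set-cover loop, implement the per-round selection via the exponential mechanism, and control runtime and utility by passing to the arrangement of $n$ dual lines with $O(n^2)$ cells (hence $O(n^6)$ candidate triangles). The reduction, the duality trick, and the generalization argument via $\VC(H^{\vee k})=\tilde O(k)$ are all accurate and are exactly what the paper does.

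However, the final paragraph contains a genuine gap. You set each round's privacy to $\hat\eps \approx \eps/\sqrt{k\log(1/\delta)}$ and invoke advanced composition over the $\tilde O(k)$ rounds. With that setting, the per-round utility slack of the exponential mechanism is $\lambda \approx (1/\hat\eps)\log|H| \approx \tfrac{\sqrt{k}}{\eps}\log|H|$, and summing a $\Theta(\lambda)$ loss over $\tilde O(k)$ rounds yields an empirical error of order $\tfrac{k^{1.5}}{\eps}\log|H|$, giving sample complexity $\tilde O\!\left(\tfrac{1}{\alpha\eps}k^{1.5}\log d\right)$, not the claimed $\tilde O\!\left(\tfrac{1}{\alpha\eps}k\log d\right)$. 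The paper explicitly flags this as the naive bound (\texttt{Example} after Theorem~\ref{thm:C_H_k}) and then removes the extra $\sqrt{k}$ via a dedicated privacy analysis (Claim~\ref{claim:SetCoverLearnerPrivacy}), adapted from Gupta et al.'s private set-cover argument: once a selected hypothesis ``removes'' the differing example $x^*$, subsequent rounds are identical on $S$ and $S'$, and a potential-function argument over the probabilities $p_j$ of selecting an $x^*$-covering hypothesis shows that one need only pay roughly $\eps/\log(1/\delta)$ per round rather than $\eps/\sqrt{k}$. Crucially, this analysis also needs the Laplace noise $w_j$ added to $|S^0|$ in Step~\ref{step:noisyCount}, which lets the negative-side case be coupled by shifting $w_j \mapsto w_j - 1$; your proposal does not include this noise or the corresponding two-sided (positive/negative elements) treatment of the score. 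Without this more refined privacy argument, advanced composition alone cannot reach the stated $k$-dependence.
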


As the greedy algorithm for set cover has many applications in computational learning theory, we hope that our techniques will continue to find much broader use.

\section{Preliminaries}\label{sec:preliminaries}

We recall standard definitions from learning theory and differential privacy. In the following $X$ is some arbitrary domain. A concept (similarly, hypothesis) over domain $X$ is a predicate defined over $X$. A concept class (similarly, hypothesis class) is a set of concepts. 

\begin{definition}[Generalization Error]
Let $\DDD\in \Delta(X)$ be a probability distribution over $X$ and let $c:x\rightarrow\{0,1\}$ be a concept. 
The {\em generalization error} of a hypothesis $h:X\rightarrow\{0,1\}$ w.r.t.\ $\DDD$ and $c$ is  
defined as $\error_{\DDD}(c,h)=\Pr_{x \sim \DDD}[h(x)\neq c(x)]$.
\end{definition}

We now recall the notion of PAC learning~\cite{Valiant84}.
Let $C$ and $H$ be a concept class and a hypothesis class over a domain $X$, and let $\AAA:\left(X\times\{0,1\}\right)^n\rightarrow H$ be an algorithm that operates on a labeled database and returns a hypothesis from $H$.

\begin{definition}[PAC Learner~\cite{Valiant84}]\label{def:PAC}
Algorithm $\AAA$ is an $(\alpha,\beta)$-PAC learner for concept class $C$ using hypothesis class $H$ with sample complexity $n$ if for every distribution $\DDD$ over $X$ and for every fixture of $c\in C$, given a labeled database $S=\left(\left(x_i,c(x_i)\right)\right)_{i=1}^n$ where each $x_i$ is drawn i.i.d.\ from $\DDD$, algorithm $\AAA$ outputs a hypothesis $h\in H$ satisfying
$$\Pr\left[ \error_{\DDD}(c,h) > \alpha\right] \leq \beta.$$
The probability is taken over the random choice of
the examples in $S$ according to $\DDD$ and the coin tosses of the learner $\AAA$.
\end{definition}

Without privacy considerations, the sample complexity of PAC learning is essentially characterized by a combinatorial quantity called the \emph{Vapnik-Chervonenkis (VC) dimension}:

\begin{definition}
Fix a concept class $C$ over domain $X$. A set $\{x_1, \dots, x_\ell\} \in X$ is \emph{shattered} by $C$ if for every labeling $b \in \{0, 1\}^\ell$, there exists $c \in C$ such that $b_1 = c(x_1), \dots, b_\ell = c(x_\ell)$. The \emph{Vapnik-Chervonenkis (VC) dimension} of $C$, denoted $\VC(C)$, is the size of the largest set which is shattered by $C$.
\end{definition}

The Vapnik-Chervonenkis (VC) dimension is an important combinatorial measure of a concept class.
Classical results in statistical learning theory show that the generalization error of a hypothesis $h$ and its empirical error (observed on a large enough sample) are similar.

\begin{definition}[Empirical Error] 
Let $S = ((x_i, \sigma_i))_{i = 1}^n \in (X\times\{0,1\})^n$ be a labeled sample. 
The {\em empirical error} of a hypothesis $h:X\rightarrow\{0,1\}$ w.r.t.\ $S$ is defined as 
$\error_{S}(h)= \frac{1}{n} |\{i : h(x_i)\neq \sigma_i\}|$.
\end{definition}

\begin{theorem}[VC-Dimension Generalization Bound, e.g.\ \cite{BlumerEhHaWa89}]\label{thm:Generalization}
Let $\DDD$ and $C$ be, respectively, a distribution and a concept class over a domain $X$, and let $c \in C$. For a sample $S=((x_i,c(x_i)))_{i=1}^n$ where $n\geq\frac{64}{\alpha}(\VC(C)\ln(\frac{64}{\alpha})+\ln(\frac{8}{\beta}))$
and the $x_i$ are drawn i.i.d. from $\DDD$, it holds that
\[\Pr\Big[\exists h\in C \text{ s.t.\ } \error_\DDD(h,c)> \alpha \ \land \ \error_S(h)\leq\frac{\alpha}{2} \Big] \le \beta.\]
\end{theorem}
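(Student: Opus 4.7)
The plan is to prove this classical uniform-convergence bound via the standard double-sampling (ghost-sample) argument, combined with Sauer--Shelah and a multiplicative concentration inequality.

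First I would symmetrize. Introduce an independent ghost sample $S'=(x'_1,\ldots,x'_n)$ drawn i.i.d.\ from $\DDD$. Let $A$ denote the target bad event and let $B$ be the two-sample event that some $h\in C$ simultaneously satisfies $\error_S(h)\le \alpha/2$ and $\error_{S'}(h)>3\alpha/4$. The first claim is $\Pr[A]\le 2\Pr[B]$: on the event $A$, pick a measurable witness $h^{*}=h^{*}(S)$; since $\error_{\DDD}(h^{*},c)>\alpha$, a multiplicative Chernoff bound on $n\cdot\error_{S'}(h^{*})\sim\mathrm{Bin}(n,\error_{\DDD}(h^{*},c))$ yields $\Pr_{S'}[\error_{S'}(h^{*})>3\alpha/4\mid S]\ge 1/2$ once $n=\Omega(1/\alpha)$, whence $\Pr[B]\ge \Pr[A]/2$.

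Next I would reduce to a finite union bound. Drawing $S$ and $S'$ independently has the same distribution as first drawing $Z=(z_1,\ldots,z_{2n})$ i.i.d.\ from $\DDD$ and then, independently for each pair $(z_{2i-1},z_{2i})$, flipping a fair coin to decide which element enters $S$ and which enters $S'$. Conditioned on $Z$, the Sauer--Shelah lemma bounds the number of distinct labelings of $Z$ by $C$ by at most $(2en/d)^{d}$, where $d=\VC(C)$, so there is a set $H_Z\subseteq C$ of that size which witnesses every distinct behavior of $C$ on $Z$, and the event $B$ depends on $C$ only through $H_Z$. For a fixed $h\in H_Z$, let $M_2$ be the number of pairs on which $h$ errs twice, $M_1$ the number on which it errs exactly once, and $W\sim\mathrm{Bin}(M_1,1/2)$ the count of single-error pairs that send their error to $S$; then $n\,\error_S(h)=M_2+W$ and $n\,\error_{S'}(h)=M_2+M_1-W$. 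The bad event for $h$ forces $M_1+2M_2>3n\alpha/4$ and $W\le n\alpha/2-M_2$, which is a deviation of $W$ well below its mean $M_1/2$. Because the small-empirical-error constraint keeps $W$ in a sub-mean (small) regime, a \emph{multiplicative} Chernoff bound (rather than the additive Hoeffding inequality) gives $\exp(-\Omega(n\alpha))$ for each $h\in H_Z$.

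Finally, a union bound over $H_Z$ gives $\Pr[B\mid Z]\le (2en/d)^{d}\exp(-\Omega(n\alpha))$; taking expectation over $Z$ and combining with the symmetrization step, $\Pr[A]\le 2(2en/d)^{d}\exp(-\Omega(n\alpha))$. Setting this $\le\beta$ and using the standard self-bounding to absorb $\ln(n/d)$ into $\ln(1/\alpha)$ yields exactly the advertised sample size $n\ge \frac{64}{\alpha}\bigl(d\ln(64/\alpha)+\ln(8/\beta)\bigr)$. The main obstacle is the per-hypothesis concentration step: one has to carefully exploit the asymmetric thresholds $\alpha/2$ on $S$ versus $3\alpha/4$ on $S'$ (and the resulting control on $W$) to invoke a multiplicative Chernoff bound; an additive Hoeffding analysis would instead produce a $1/\alpha^{2}$ dependence, losing the sharp realizable rate that this theorem asserts.
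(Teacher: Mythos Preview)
Your proof sketch is the standard double-sampling/symmetrization argument from \cite{BlumerEhHaWa89} and is essentially correct. Note, however, that the paper does not prove this theorem at all: it is stated in the preliminaries as a classical result with a citation, and is simply invoked later to pass from small empirical error to small generalization error. So there is no ``paper's own proof'' to compare against; your write-up supplies what the paper deliberately omits as background.
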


\subsection{Conjunctions and Disjunctions}

\begin{definition}
Let $H$ be a concept class over a domain $X$, and let $k\in\N$. We use $H^{\vee k}$ to denote the class of all disjunctions (i.e., {\rm OR}) of at most $k$ concepts from $H$, and similarly, we denote $H^{\wedge k}$ for the class of all conjunctions (i.e., {\rm AND}) of at most $k$ concepts from $H$.
\end{definition}

The following observation is standard (see, e.g.,~\cite{EisenstatA07}).
\begin{observation}\label{obs:VCofOR}
For every concept class $H$ we have
$$\VC(H^{\vee k})\leq O(k\log(k)\cdot \VC(H)) \qquad\text{and}\qquad \VC(H^{\wedge k})\leq O(k\log(k)\cdot \VC(H)).$$
\end{observation}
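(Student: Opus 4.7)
My plan is to prove the observation via the standard Sauer-Shelah / growth-function argument. First I would reduce the two bounds to one: by De Morgan's law, $h_1 \wedge \cdots \wedge h_k = \neg(\neg h_1 \vee \cdots \vee \neg h_k)$, and the VC dimension of a class is invariant both under complementing every concept and under taking the ``complement class'' $\bar{H} = \{\neg h : h \in H\}$. Thus $\VC(H^{\wedge k}) = \VC(\bar{H}^{\vee k})$ with $\VC(\bar{H}) = \VC(H)$, so it suffices to bound $\VC(H^{\vee k})$.

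Set $d = \VC(H)$ and fix any set $X_0$ of $n$ points. By the Sauer-Shelah lemma, the projection of $H$ onto $X_0$ has size at most $\Pi_H(n) \leq (en/d)^d$. Any concept in $H^{\vee k}$ is a disjunction of at most $k$ concepts from $H$, and its restriction to $X_0$ is determined by the restrictions of those at most $k$ summands. Counting ordered tuples with repetition (allowing an always-false disjunct to pad to exactly $k$ terms), the number of distinct restrictions satisfies
\[ \Pi_{H^{\vee k}}(n) \;\leq\; (k+1)\cdot \Pi_H(n)^k \;\leq\; (k+1)\cdot \left(\frac{en}{d}\right)^{kd}. \]

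If $H^{\vee k}$ shatters an $n$-point set then $2^n \leq (k+1)(en/d)^{kd}$, which after taking logarithms yields $n \leq kd \log_2(en/d) + \log_2(k+1)$. Substituting the ansatz $n = C \cdot k \cdot \VC(H) \cdot \log k$ and observing that $\log_2(en/d) = O(\log k)$ at this scale then verifies the inequality for any sufficiently large absolute constant $C$, yielding $n = O(k \log(k)\cdot \VC(H))$ as claimed. The only mildly delicate part is the transcendental-inequality manipulation at the end, but that is routine bookkeeping; the essential content is the counting bound based on Sauer-Shelah.
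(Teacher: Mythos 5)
Your proof is correct. The paper does not prove this observation; it simply cites it as standard (pointing to Eisenstat--Angluin, where the underlying $O(kd\log k)$ upper bound traces back to Blumer, Ehrenfeucht, Haussler, and Warmuth). Your route---De Morgan to reduce $H^{\wedge k}$ to $\bar{H}^{\vee k}$ (using invariance of VC dimension under complementation), then a Sauer--Shelah growth-function count $\Pi_{H^{\vee k}}(n)\le\Pi_H(n)^k$ followed by solving $2^n\le (en/d)^{kd}$ for $n$---is exactly the standard argument behind that citation, and the final transcendental step does go through as you indicate (writing $m=n/(kd)$ reduces it to $m-\log_2 m\le O(\log k)$). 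One small tidy-up: you do not need an always-false ``padding'' disjunct, which may not be realizable by any member of $H$ on the given point set; a disjunction of $j<k$ terms equals the same disjunction with one of its terms repeated $k-j$ extra times, so $\Pi_{H^{\vee k}}(n)\le\Pi_H(n)^k$ holds outright and the spurious $(k+1)$ factor can be dropped.
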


Our strategy in the following sections for privately learning a concept class $C$ will be to use a ``simpler'' concept class $H$ such that for some (hopefully small) $k$ we have $C\subseteq H^{\vee k}$ or $C\subseteq H^{\wedge k}$.

\begin{example}
Let $\disj_{k,d}$ denote the class of all disjunctions (i.e., {\rm OR}) of at most $k$ literals over $d$ Boolean variables, and similarly, let $\conj_{k,d}$ denote the class of all conjunctions (i.e., {\rm AND}) of at most $k$ literals over $d$ Boolean variables. Trivially, $\disj_{k,d}=(\disj_{1,d})^{\vee k}$, and $\conj_{k,d}=(\conj_{1,d})^{\wedge k}$.
\end{example}

\subsection{Differential privacy}

Two databases $S,S'$ are called {\em neighboring} if they differ on a single entry.

\begin{definition}[Differential Privacy~\cite{DMNS06}] 
Let $\AAA$ be a randomized algorithm whose input is a database. Let $\eps,\delta \geq 0$. Algorithm $\AAA$ is $(\eps,\delta)$-differentially private if for all neighboring databases $S,S'$ and for any event $T$,
$$\Pr[\AAA(S) \in T] \leq e^\eps \cdot \Pr[\AAA(S') \in T] + \delta,$$ 
where the probability is taken over the coin tosses of the algorithm $\AAA$. When $\delta=0$ we omit it and say that $\AAA$ is $\eps$-differentially private.
\end{definition}

Our learning algorithms are designed via repeated applications of differentially private algorithms on a database. Composition theorems for differential privacy show that the price of privacy for multiple (adaptively chosen) interactions degrades gracefully.

\begin{theorem}[Composition of Differential Privacy \cite{DKMMN06, DL09, DRV10}]\label{thm:composition}
Let $0 < \eps, \delta' < 1$ and $\delta \in [0, 1]$. Suppose an algorithm $\AAA$ accesses its input database $S$ only through $m$ adaptively chosen executions of $(\eps, \delta)$-differentially private algorithms. Then $\AAA$ is 
\begin{enumerate}
\item $(m \eps, m\delta)$-differentially private, and
\item $(\eps', m \delta + \delta')$-differentially private for $\eps = \sqrt{2m\ln(1/\delta')} \cdot \eps + 2m\eps^2$.
\end{enumerate}
\end{theorem}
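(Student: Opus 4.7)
The plan is to analyze both parts of the composition theorem through the \emph{privacy loss random variable}. Given neighboring databases $S, S'$ and any possible output $y$ of an $(\eps,\delta)$-DP algorithm, define
\[
L(y) \;=\; \ln\!\frac{\Pr[\AAA(S)=y]}{\Pr[\AAA(S')=y]}.
\]
For the adaptive composition of $m$ mechanisms, let $Y_i$ denote the output of the $i$-th mechanism (which may depend on $Y_1,\dots,Y_{i-1}$), and let $L_i$ be the conditional privacy loss of the $i$-th step. The total privacy loss is $L_{\mathrm{tot}}=\sum_{i=1}^m L_i$. The overarching idea is to show that $L_{\mathrm{tot}}$ is small with high probability, and then translate that back to an $(\eps',\delta'')$-DP guarantee.

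For Part~1, the plan is a straightforward hybrid/induction argument on $m$. Condition on the transcript $(y_1,\dots,y_{i-1})$ and use the fact that the $i$-th mechanism is $(\eps,\delta)$-DP conditional on this transcript. Then, via a union bound over the $m$ ``bad sets'' on which each step's $(\eps,\delta)$ guarantee might be violated, and multiplying together the $e^\eps$ factors on the good side, one obtains the $(m\eps, m\delta)$ bound. A careful event decomposition (splitting each step's output space into a high-probability ``good'' event and a $\delta$-mass ``bad'' event) makes the factors accumulate additively in $\delta$ and multiplicatively in $e^\eps$.

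For Part~2, the key observation is that when the per-step loss $\eps$ is small, the expected privacy loss $\E[L_i\mid Y_1,\dots,Y_{i-1}]$ is $O(\eps^2)$ rather than $\eps$; a short calculation from $(\eps,0)$-DP shows $\E[L_i]\le \eps(e^\eps-1)\le 2\eps^2$ for $\eps\le 1$. Moreover, each $L_i$ is bounded in magnitude by $\eps$ on the complement of a $\delta$-mass bad event. This lets us view the sequence $L_i-\E[L_i\mid\cdot]$ as a bounded martingale difference sequence and apply Azuma--Hoeffding: with probability at least $1-\delta'$ over the transcript,
\[
L_{\mathrm{tot}} \;\le\; 2m\eps^2 \;+\; \sqrt{2m\ln(1/\delta')}\cdot\eps \;=:\;\eps'.
\]
Combining the $\delta'$ tail with the $m\delta$ mass of the per-step bad events, and using the standard dictionary between ``bounded privacy loss with high probability'' and ``approximate DP,'' yields $(\eps', m\delta+\delta')$-differential privacy. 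The handling of adaptivity is clean here because conditioning on the past transcript preserves the $(\eps,\delta)$-DP guarantee of each subsequent step by assumption.

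The main obstacle will be the careful bookkeeping for the $\delta$ terms in the $(\eps,\delta)$ case: one must decompose the output space of each step into a ``typical'' part where $|L_i|\le\eps$ and a residual of probability at most $\delta$, then argue that on the typical event Azuma applies, while the total probability of landing in any residual is at most $m\delta$ by a union bound. Reconciling the $\delta'$ from the concentration tail with the $m\delta$ from the per-step residuals to produce the clean $(m\delta+\delta')$ failure probability in the final DP statement is the technically delicate piece; the rest is bounded-differences concentration and elementary inequalities on $e^\eps-1$.
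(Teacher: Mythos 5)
The paper does not prove Theorem~\ref{thm:composition}; it cites it as a standard composition result from \cite{DKMMN06,DL09,DRV10}. Your sketch reproduces the canonical argument from those sources: Part~1 by a hybrid argument with additive accumulation of the $\delta$-mass bad events, and Part~2 via the privacy-loss random variable $L_i$, the bound $\E[L_i]\le\eps(e^\eps-1)\le 2\eps^2$, the pointwise bound $|L_i|\le\eps$ on the good event, and Azuma--Hoeffding applied to the martingale differences $L_i-\E[L_i\mid\text{past}]$, followed by the usual conversion from ``$L_{\rm tot}\le\eps'$ with probability $\ge 1-\delta''$'' to $(\eps',\delta'')$-DP. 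That is exactly the Dwork--Rothblum--Vadhan proof of advanced composition, so you are following the same route as the cited literature rather than a genuinely different one.

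Two small points worth tightening if you were to write this out in full. First, since $|L_i|\le\eps$ on the good event and $|\E[L_i\mid\text{past}]|\le 2\eps^2$, the martingale differences lie in an interval of length about $2\eps$, not $\eps$; to recover the stated constant $\sqrt{2m\ln(1/\delta')}\,\eps$ you need the two-sided (Hoeffding-type) form of Azuma, $\Pr[\sum d_i\ge t]\le\exp(-2t^2/\sum(b_i-a_i)^2)$, rather than the cruder $\exp(-t^2/2\sum c_i^2)$ with $c_i$ a bound on $|d_i|$. Second, the step that replaces each $(\eps,\delta)$-DP mechanism, conditioned on the transcript so far, by a mixture that is $\eps$-DP with probability $1-\delta$ (the decomposition you allude to) is the load-bearing lemma that makes the martingale argument apply at all in the approximate-DP case; it deserves to be stated rather than folded into ``bookkeeping,'' since without it the $|L_i|\le\eps$ bound used by Azuma is simply false. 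With those two caveats made explicit the argument is sound.
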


The most basic constructions of differentially private algorithms are via the Laplace mechanism as follows.

\begin{definition}[The Laplace Distribution]
A random variable has probability distribution $\Lap(b)$ if its probability density function is $f(x)=\frac{1}{2b}\exp(-\frac{|x|}{b})$, where $x\in\R$.
\end{definition}

\begin{definition}[Sensitivity]
A $f$ mapping databases to the reals has {\em sensitivity $s$} if for every neighboring $S,S$, it holds that $|f(S)-f(S')|\leq s$.
\end{definition}

\begin{theorem}[The Laplacian Mechanism \cite{DMNS06}]\label{thm:lap}
Let $f$ be a sensitivity $s$ function. The mechanism $\AAA$ that on input a database $S$ 
adds noise with distribution $\Lap(\frac{s}{\eps})$ to the output of $f(S)$ preserves $\eps$-differential privacy. Moreover,
$$\Pr\Big[|\AAA(S)-f(S)|>\Delta\Big]\leq \exp\left(-\frac{\epsilon \Delta}{s}\right).$$
\end{theorem}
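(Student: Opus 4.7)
The plan is to handle the two assertions separately. For the $\eps$-differential privacy claim, I would fix two neighboring databases $S,S'$ and compare, pointwise, the probability densities of the outputs $\AAA(S)=f(S)+Z$ and $\AAA(S')=f(S')+Z$, where $Z\sim\Lap(s/\eps)$. At any point $x\in\R$, the density of $\AAA(S)$ equals $\frac{\eps}{2s}\exp\!\bigl(-\eps|x-f(S)|/s\bigr)$, so the ratio of the density of $\AAA(S)$ to that of $\AAA(S')$ at $x$ is exactly $\exp\!\bigl(\tfrac{\eps}{s}(|x-f(S')|-|x-f(S)|)\bigr)$. By the reverse triangle inequality this is at most $\exp(\eps|f(S)-f(S')|/s)$, and by the assumption that $f$ has sensitivity $s$ this is at most $e^{\eps}$. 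Since the pointwise density ratio is uniformly bounded by $e^{\eps}$, integrating over any (Borel) event $T\subseteq\R$ immediately yields $\Pr[\AAA(S)\in T]\leq e^{\eps}\Pr[\AAA(S')\in T]$, which is the desired $(\eps,0)$-differential privacy guarantee.

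For the tail bound, I would use the fact that, by construction, $\AAA(S)-f(S)$ is distributed as $\Lap(s/\eps)$, so the tail probability depends only on the noise distribution and not on the function or the database. A direct integration of the Laplace density gives
$$\Pr\!\bigl[|\AAA(S)-f(S)|>\Delta\bigr]\;=\;2\int_{\Delta}^{\infty}\frac{\eps}{2s}\exp\!\Bigl(-\frac{\eps x}{s}\Bigr)\,dx\;=\;\exp\!\Bigl(-\frac{\eps\Delta}{s}\Bigr),$$
exactly matching the stated bound.

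I do not foresee a real obstacle here: both pieces are standard and reduce to the pointwise density calculation combined with the reverse triangle inequality, plus a one-line integration of the exponential tail. The only mildly subtle point is the passage from the pointwise density-ratio bound to the ratio of event probabilities, but this follows immediately from the monotonicity of the Lebesgue integral applied to the two densities against the indicator of $T$.
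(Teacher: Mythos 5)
Your proof is correct and is the standard argument for the Laplace mechanism: a pointwise density-ratio bound via the reverse triangle inequality plus sensitivity gives $\eps$-DP, and the tail bound is a one-line integration of the Laplace density. The paper itself does not prove this theorem (it is imported from \cite{DMNS06} as a known preliminary), so there is nothing in the paper to diverge from; your argument matches the canonical proof found in the cited source and in standard references.
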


We next describe the exponential mechanism of McSherry and Talwar~\cite{MT07}. Given a database $S$, the exponential mechanism privately chooses a ``good'' solution $h$ out of a set of possible solutions $H$ (in our context, $H$ will be a hypothesis class). This ``goodness'' is quantified using a \emph{quality function} that matches solutions to scores.

\begin{definition}[Quality function]
A \emph{quality function} $q=q(S,h)$ maps a database $S$ and a solution $h\in H$ to a real number, identified as the score of the solution $h$ w.r.t.\ the database $S$. We say that $q$ has sensitivity $s$ if $q(\cdot,h)$ has sensitivity $s$ for every $h\in H$.
\end{definition}

Given a sensitivity-1 quality function $q$ and a database $S$, the exponential mechanism chooses a solution $h\in H$ with probability proportional to $\exp\left(\epsilon \cdot q(S,h) /2 \right)$.

\begin{proposition}[Properties of the exponential mechanism]\label{prop:expMech}
(i) The exponential mechanism is $\eps$-differentially private. (ii)
Let $\lambda>0$. The exponential mechanism outputs a solution $h$ such that $q(S,h)\leq\max_{f\in H}\{q(S,f)\} - \lambda$ with probability at most $|H| \cdot \exp(-\eps \lambda /2)$.
\end{proposition}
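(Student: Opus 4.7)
The plan is to prove the two parts of the proposition separately, both via direct calculation from the definition of the exponential mechanism, which outputs $h \in H$ with probability $\mu_S(h) = \exp(\eps\cdot q(S,h)/2) / Z(S)$ where $Z(S) = \sum_{f \in H}\exp(\eps\cdot q(S,f)/2)$.

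For part (i), I would fix neighboring databases $S, S'$ and a target output $h \in H$, and bound the ratio $\mu_S(h)/\mu_{S'}(h)$. This ratio factors as
\[
\frac{\exp(\eps\cdot q(S,h)/2)}{\exp(\eps\cdot q(S',h)/2)} \cdot \frac{Z(S')}{Z(S)}.
\]
Using sensitivity $1$ of $q$, the first factor is bounded by $\exp(\eps/2)$. For the second factor, I would apply the same sensitivity bound termwise: each $\exp(\eps\cdot q(S',f)/2) \le \exp(\eps/2)\cdot \exp(\eps\cdot q(S,f)/2)$, so $Z(S') \le \exp(\eps/2)\cdot Z(S)$. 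Multiplying gives $\mu_S(h)/\mu_{S'}(h) \le \exp(\eps)$. Since this holds for every singleton output, it extends to every event $T \subseteq H$ by summing, establishing $\eps$-differential privacy (with $\delta=0$, so pure DP).

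For part (ii), let $\opt := \max_{f \in H} q(S,f)$ and fix $\lambda > 0$. Let $B = \{h \in H : q(S,h) \le \opt - \lambda\}$ be the set of ``bad'' outputs. For any $h \in B$, the numerator in $\mu_S(h)$ satisfies $\exp(\eps\cdot q(S,h)/2) \le \exp(\eps(\opt-\lambda)/2)$. For the denominator, since at least one $f^* \in H$ attains $q(S,f^*) = \opt$, we have $Z(S) \ge \exp(\eps\cdot \opt/2)$. Therefore
\[
\Pr[\text{output} \in B] = \sum_{h \in B} \mu_S(h) \le |B| \cdot \frac{\exp(\eps(\opt-\lambda)/2)}{\exp(\eps\cdot \opt/2)} \le |H|\cdot \exp(-\eps\lambda/2),
\]
which is exactly the claimed bound.

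Neither part presents a serious obstacle; this is a textbook calculation. The only subtlety to flag is ensuring the sensitivity bound is applied correctly in both directions in the denominator ratio for (i), and making sure the existence of an optimal $f^*$ (rather than a supremum argument) is legitimate — which is immediate when $H$ is finite, as is implicit here since part (ii) quantifies over $|H|$.
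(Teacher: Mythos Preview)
Your proof is correct and is precisely the standard argument for the exponential mechanism. Note that the paper does not actually supply a proof of this proposition: it is stated in the preliminaries as a known result due to McSherry and Talwar~\cite{MT07}, so there is no original proof to compare against.
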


\section{A Generic Construction via Set Cover}\label{sec:GenericConstruction}

In this section we present our generic construction for privately learning a concept class $C$ containing concepts that can be written as the conjunction or the disjunction of functions in a (hopefully simpler) class $H$. For readability we focus on conjunctions. The extension to disjunction is straightforward.

\begin{algorithm*}[t]

\caption{\bf\texttt{SetCoverLearner}}\label{alg:SetCoverLearner}

{\bf Settings: }Concept classes $C,H$ and an integer $k\in\N$ such that $C\subseteq H^{\wedge k}$.

\smallskip

\noindent {\bf Input:} Labeled sample $S=\{(x_i,\sigma_i)\}_{i=1}^n\in(X\times\{0,1\})^n$, privacy parameter $\eps$.

\smallskip

\noindent {\bf Tool used: }A selection procedure $\AAA$ that takes a database $S$ and a quality function $q$ (that assigns\\
\noindent \hphantom{\bf Tool used: }a score to each hypothesis in $H$), and returns a hypothesis $h\in H$.
\begin{enumerate}[leftmargin=15pt,rightmargin=10pt,itemsep=1pt]
\item For $j=1$ to $2k\log\frac{2}{\alpha}$
\begin{enumerate}
	\item Let $S^1$ and $S^0$ denote the set of positive and negative examples in $S$, respectively.
	\item For $h\in H$ let $\#_{h\rightarrow0}(S^1)$ and $\#_{h\rightarrow0}(S^0)$ denote the number of positive and negative examples in $S$, respectively, that $h$ labels as 0. That is, 
	$$\#_{h\rightarrow0}(S^1)=|\{x_i\in S^1: h(x_i)=0\}| \qquad\text{and}\qquad \#_{h\rightarrow0}(S^0)=|\{x_i\in S^0: h(x_i)=0\}|.$$
	\item\label{step:noisyCount} Let $w_j\leftarrow\left\lfloor\Lap\left(\frac{2k}{\eps}\log \frac{2}{\alpha}\right)\right\rfloor$ and set	$b_j= |S^0|+w_j-\frac{2k}{\eps}\log\left(\frac{2}{\alpha}\right)\log\left(\frac{2k}{\beta}\log \frac{2}{\alpha}\right)$.
	\item For every $h\in H$, define $q(h)=\min\left\{\#_{h\rightarrow0}(S^0) - \frac{b_j}{k}\;\;,\;\; -\#_{h\rightarrow0}(S^1)\right\}$.
	\item\label{step:selectionProcedure} Let $h_j\leftarrow\AAA(S,q)$, and delete from $S$ every $(x_i,\sigma_i)$ such that $h_j(x_i)=0$.
\end{enumerate}
\item Return the hypothesis $h_{fin}=h_1\wedge h_2\wedge\dots\wedge h_{2k\log \frac{2}{\alpha}}$.
\end{enumerate}
\end{algorithm*}

\begin{claim}\label{claim:smallEmpiricalError}
Fix a target function $c^*\in C$, and consider the execution of \texttt{SetCoverLearner} on a sample $S=\{(x_i,c^*(x_i))\}_{i=1}^n$. 
Assume that every run of the selection procedure $\AAA$ in Step~\ref{step:selectionProcedure} returns a hypothesis $h_j$ s.t.\ $q(h_j)\geq\max_{f\in H}\{q(f)\}-\lambda$. Then, with probability at least $1-\beta$ it holds that $h_{fin}$ errs on at most $\max\left\{\frac{\alpha n}{2},\; \frac{8k}{\eps}\log\left(\frac{2}{\alpha}\right)\log\left(\frac{2k}{\beta}\log \frac{2}{\alpha}\right)\right\}+4k\lambda\log\frac{2}{\alpha}$ example in $S$.
\end{claim}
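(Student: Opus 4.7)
The plan is to condition on a high-probability event controlling the Laplace noises, then separately upper bound the positives that $h_{fin}$ mislabels (those deleted in some iteration) and the negatives it mislabels (those still present in $S$ at the end), and combine.

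First I would define the good event $E$: for every $j \in [m]$ (where $m = 2k\log(2/\alpha)$), the noise satisfies $|w_j| \leq T$, where $T = \frac{2k}{\eps}\log(2/\alpha)\log(2k\log(2/\alpha)/\beta)$. By the Laplace tail bound, each single $j$ fails with probability at most about $\beta/m$, so by a union bound $\Pr[E] \geq 1-\beta$. Under $E$, step~\ref{step:noisyCount} guarantees $n_j - 2T \leq b_j \leq n_j$, writing $n_j := |S^0|$ at the start of iteration $j$.

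Next I would lower bound $\max_{f\in H} q(f)$ at each iteration using the target concept. Since $c^* \in C \subseteq H^{\wedge k}$ we may write $c^* = c_1 \wedge \cdots \wedge c_k$ with $c_i \in H$. Because $c^*$ labels every positive example as $1$, we have $\#_{c_i \to 0}(S^1_j) = 0$ for every $i$. Because $c^*$ labels every remaining negative example as $0$, at least one clause evaluates to $0$ on it, so $\sum_{i=1}^{k}\#_{c_i \to 0}(S^0_j) \geq n_j$, and by pigeonhole some $c_{i^{*}}$ satisfies $\#_{c_{i^{*}} \to 0}(S^0_j) \geq n_j/k$. Combined with $b_j \leq n_j$, this gives $q(c_{i^{*}}) = \min\{n_j/k - b_j/k,\; 0\} \geq 0$, hence $\max_f q(f) \geq 0$ and therefore $q(h_j) \geq -\lambda$ by assumption on $\AAA$. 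Unpacking the definition of $q$, this means each iteration removes at most $\lambda$ positives and at least $\max\{0,\ b_j/k - \lambda\}$ negatives.

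Now I account for errors. The positives mislabeled by $h_{fin}$ are exactly those deleted at some iteration, so there are at most $m \lambda = 2k\lambda\log(2/\alpha)$ of them. The negatives mislabeled by $h_{fin}$ are those still in $S$ after the loop, so there are $n_{m+1}$ of them, and I bound this via the recurrence
\[
n_{j+1} \;\leq\; n_j \;-\; \Bigl(\tfrac{n_j - 2T}{k} - \lambda\Bigr) \;=\; n_j\bigl(1 - \tfrac{1}{k}\bigr) + \tfrac{2T}{k} + \lambda,
\]
valid on the event $E$. Iterating $m$ times and summing the geometric series yields
\[
n_{m+1} \;\leq\; n\bigl(1 - \tfrac{1}{k}\bigr)^{m} + 2T + k\lambda \;\leq\; n \cdot \tfrac{\alpha^{2}}{4} + 2T + k\lambda,
\]
since $(1-1/k)^{2k\log(2/\alpha)} \leq e^{-2\log(2/\alpha)} = \alpha^{2}/4$.

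Finally I would combine. Total errors are at most $n_{m+1} + m\lambda \leq \tfrac{\alpha n}{4} + 2T + k\lambda + 2k\lambda\log(2/\alpha)$, and a short two-case check (depending on whether $\alpha n/2 \geq 4T$ or $\alpha n/2 < 4T$) shows this is at most $\max\{\alpha n/2,\,4T\} + 4k\lambda\log(2/\alpha)$, matching the stated bound (since $4T = \frac{8k}{\eps}\log(2/\alpha)\log(2k\log(2/\alpha)/\beta)$). The main obstacle is bookkeeping: ensuring $b_j \leq n_j$ so the clause $c_{i^{*}}$ really has nonnegative quality (otherwise the induction on $n_j$ breaks), and performing the case split so that the additive $2T + k\lambda$ absorbs into the $\max$ rather than adding an extra $4T$. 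Both are handled by the choice of $T$ and the geometric-series solve above.
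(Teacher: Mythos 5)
Your proof is correct and follows essentially the same approach as the paper's: condition on the event that all Laplace noises are bounded by $T=\Delta$, use the decomposition $c^*=c_1\wedge\cdots\wedge c_k$ with a pigeonhole argument to show $\max_{f\in H}q(f)\geq 0$ (hence $q(h_j)\geq-\lambda$) in every iteration, then separately bound the deleted positives by $m\lambda$ and the surviving negatives, and combine. The only bookkeeping difference is that you iterate the explicit recurrence $n_{j+1}\leq n_j(1-1/k)+2T/k+\lambda$ and sum the geometric series, whereas the paper argues that whenever $|S^0|\geq 2k\lambda+4\Delta$ the negative count shrinks by a $(1-\tfrac{1}{2k})$ factor and otherwise it has already fallen below that threshold for good; both routes yield the same final bound.
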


\begin{proof}
First observe that there are $2k\log \frac{2}{\alpha}$ draws from $\Lap\left(\frac{2k}{\eps}\log \frac{2}{\alpha}\right)$ throughout the execution. By the properties of the Laplace distribution, with probability at least $1-\beta$ it holds that the maximum absolute value of these random variables is at most $\Delta=\frac{2k}{\eps}\log\left(\frac{2}{\alpha}\right)\log\left(\frac{2k}{\beta}\log \frac{2}{\alpha}\right)$. We continue with the analysis assuming that this is the case. In particular, this means that in every iteration $j$ we have $|S^0|-2\Delta\leq b_j\leq|S^0|$. Thus, in every iteration  there exists a hypothesis $\tilde{h}\in H$ with $q(\tilde{h})\geq0$. To see this, recall that the target concept $c^*$ can be written as $c^*=h^*_1\wedge\dots\wedge h^*_k$ for $h^*_1,\dots,h^*_k\in H$. Hence, in every iteration $j$ there is a hypothesis $\tilde{h}\in H$ that correctly classifies {\em all} of the (remaining) positive points in $S$ while correctly classifying at least $1/k$ fraction of the (remaining) negative points in $S$, i.e., at least $|S^0|/k\geq b_j/k$ negative points. Such a hypothesis $\tilde{h}$ satisfies $q(\tilde{h})=0$. By our assumption on the selection procedure $\AAA$, we therefore have that in each iteration $j$, the selection procedure identifies a hypothesis $h_j$ s.t.\ $q(h_j)\geq-\lambda$.

By the definition of $q$, in every iteration $j$ we have that the selected $h_j$ misclassifies at most $\lambda$ of the remaining positive examples in $S$. Therefore, $h_{fin}$ misclassifies at most $2k\lambda\log\frac{2}{\alpha}$ positive examples in $S$. Moreover, in every iteration $j$ s.t.\ $|S^0|\geq2k\lambda+4\Delta$ we have that $h_j$ classifies correctly at least $\frac{1}{2k}$ fraction of the negative examples in $S$. To see this, observe that as $q(h_j)\geq-\lambda$ we have
$$
\#_{h_j\rightarrow0}(S^0) \geq \frac{b_j}{k} -\lambda\geq\frac{|S^0|-2\Delta}{k}-\lambda\geq\frac{|S^0|}{2k}.
$$
That is, either there exists an iteration $j$ in which number of negative points in $S$ drops below 
$2k\lambda+4\Delta$, or every iteration shrinks the number of negative examples by 
a factor of $\frac{1}{2k}$, in which case after $2k\log\frac{2}{\alpha}$ iterations there could be at most $\frac{\alpha n}{2}$ negative 
points in $S$. Observe that $h_{fin}$ does not err on negative points that were removed from $S$, and therefore, there could be at most 
$\max\left\{\frac{\alpha n}{2},\; 2k\lambda+4\Delta\right\}$ negative points on which $h_{fin}$ errs. Overall, $h_{fin}$ errs on at most $\max\left\{\frac{\alpha n}{2},\; 4\Delta\right\}+4k\lambda\log\frac{2}{\alpha}$ points in $S$.
\end{proof}

Claim~\ref{claim:smallEmpiricalError} ensures that if at every step $\AAA$ picks $h_j$ of high quality, then (w.h.p.)\ algorithm \texttt{SetCoverLearner} returns a hypothesis from $H^{\wedge 2k\log\frac{2}{\alpha}}$ with low empirical error. Combining this with standard generalization bounds and with Observation~\ref{obs:VCofOR} (that bounds the VC dimension of $H^{\wedge 2k\log\frac{2}{\alpha}}$) we get the following theorem.

\begin{theorem}\label{thm:C_H_k}
Let $C,H,k$ be two concept classes and an integer such that $C\subseteq H^{\wedge k}$. Let $\AAA$ be a selection procedure that takes a database $S$ and a quality function $q$, and returns a hypothesis $h\in H$ such that $q(h_j)\geq\max_{f\in H}\{q(f)\}-\lambda$ with probability at least $1-\frac{\beta}{4k\log(2/\alpha)}$. Then, algorithm \texttt{SetCoverLearner} with $\AAA$ as the selection procedure is an $(\alpha,\beta)$-PAC learner for $C$ with sample complexity
$$n=\Theta\left(\frac{k\log\frac{1}{\alpha}}{\alpha}\left(\VC(H)\log(k)+\lambda+\frac{1}{\eps}\log\left(\frac{k}{\beta}\log\frac{1}{\alpha}\right)\right)\right).$$
\end{theorem}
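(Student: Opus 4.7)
The plan is to combine Claim~\ref{claim:smallEmpiricalError}, which bounds the empirical error of the output of \texttt{SetCoverLearner} conditional on every invocation of $\AAA$ succeeding, with the VC generalization bound of Theorem~\ref{thm:Generalization}, after bounding the VC dimension of the hypothesis class to which $h_{fin}$ belongs via Observation~\ref{obs:VCofOR}.

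First I would observe that the output $h_{fin}$ lies in $H^{\wedge m}$ for $m := 2k\log(2/\alpha)$, so by Observation~\ref{obs:VCofOR} we have $\VC(H^{\wedge m}) = O(m\log m \cdot \VC(H)) = O\!\bigl(k\log(1/\alpha)\cdot\log(k\log(1/\alpha))\cdot \VC(H)\bigr)$. Since the target $c^\ast \in C \subseteq H^{\wedge k} \subseteq H^{\wedge m}$, Theorem~\ref{thm:Generalization} applied with the class $H^{\wedge m}$ in place of $C$ ensures that, once $n = \Omega\!\left(\frac{1}{\alpha}\bigl(\VC(H^{\wedge m})\log(1/\alpha) + \log(1/\beta)\bigr)\right)$, with probability at least $1-\beta/3$ over the draw of $S$, every hypothesis in $H^{\wedge m}$ whose empirical error on $S$ is at most $\alpha/2$ has generalization error at most $\alpha$.

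Next I would handle the algorithm's internal randomness by a union bound. By hypothesis each of the $m$ invocations of $\AAA$ returns an $h_j$ with $q(h_j) \geq \max_f q(f) - \lambda$ except with probability $\beta/(4k\log(2/\alpha)) = \beta/(2m)$, so all $m$ invocations succeed simultaneously with probability at least $1-\beta/2$. Conditional on this event, Claim~\ref{claim:smallEmpiricalError} applied with its failure parameter set to $\beta/3$ gives, with probability at least $1-\beta/3$ over the Laplace noise, that the empirical error count of $h_{fin}$ is at most
$$\max\!\left\{\frac{\alpha n}{2},\; \frac{8k}{\eps}\log\!\frac{2}{\alpha}\cdot\log\!\frac{6k\log(2/\alpha)}{\beta}\right\} + 4k\lambda\log\frac{2}{\alpha}.$$

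To feed this into the generalization bound I need the empirical error \emph{rate} to be at most $\alpha/2$, so I would simply run \texttt{SetCoverLearner} with accuracy parameter $\alpha/8$ in place of $\alpha$; this only changes the constants in $m$ and in the displayed bound. Then choosing $n$ large enough that each of the three summands above is at most $\alpha n/8$ yields three lower bounds on $n$: one of order $\frac{k}{\alpha\eps}\log(1/\alpha)\log(k\log(1/\alpha)/\beta)$ from the Laplace term, one of order $\frac{k\lambda\log(1/\alpha)}{\alpha}$ from the $\lambda$ term, and one of order $\frac{\VC(H^{\wedge m})\log(1/\alpha)}{\alpha}$ from generalization. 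Summing these and folding $\log(k\log(1/\alpha))$ into $\log k$ inside the $\tilde{O}$ gives the stated sample complexity, while a union bound over the three failure events (the $m$ selection calls, the $m$ Laplace draws, and generalization) gives total failure probability at most $\beta$. The main obstacle I expect is simply this three-way bookkeeping --- allotting each failure event a constant fraction of $\beta$, adjusting the input accuracy from $\alpha$ down to a constant fraction thereof so that the ``$\alpha n/2$'' term in Claim~\ref{claim:smallEmpiricalError} falls safely below the generalization threshold $\alpha n/2$, and then matching the resulting three additive requirements on $n$ against the clean form in the theorem statement.
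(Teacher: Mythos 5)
Your proposal is correct and follows essentially the same route the paper intends: combine Claim~\ref{claim:smallEmpiricalError} (conditioned on all $\approx 2k\log(2/\alpha)$ calls to $\AAA$ succeeding, via a union bound) with Observation~\ref{obs:VCofOR} and the VC generalization bound of Theorem~\ref{thm:Generalization}, rescaling the accuracy parameter by a constant so that the ``$\max\{\alpha n/2,\dots\}+4k\lambda\log\tfrac{2}{\alpha}$'' empirical-error-count bound actually falls below the $\alpha n/2$ threshold needed for generalization. The paper only gestures at this combination in one sentence before the theorem statement; your bookkeeping of the three failure budgets and the $\alpha$-rescaling is exactly the missing detail and is sound.
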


\subsection{Tuning the selection procedure}

If $H$ is finite, then one could directly implement the selection procedure $\AAA$ using the exponential mechanism 
 of McSherry and Talwar~\cite{MT07} to find a hypothesis $h_j$ with large $q(h_j)$ at each iteration. In order to guarantee that all of the $\approx k$ iterations of algorithm \texttt{SetCoverLearner} satisfy together $(\eps,\delta)$-differential privacy, it suffices that each application of the exponential mechanism satisfies $\hat{\eps}\approx\frac{\eps}{\sqrt{k}}$-differential privacy (see Theorem~\ref{thm:composition}). When choosing such an $\hat{\eps}$, the exponential mechanism identifies, in every iteration, an $h_j$ such that 
$$q(j)\gtrsim\max_{f\in H}\{q(f)\}-\frac{1}{\hat{\eps}}\log|H|\approx\max_{f\in H}\{q(f)\}-\frac{\sqrt{k}}{\eps}\log|H|.$$
This gives a selection procedure $\AAA$ which selects $h_j$ with $q(h_j)\geq\max_{f\in H}\{q(f)\}-\lambda$, for $\lambda\approx\frac{\sqrt{k}}{\eps}\log|H|$.

\begin{example}
There exist efficient $(\eps,\delta)$-differentially private $(\alpha,\beta)$-PAC learners for $\conj_{k,d}$ and for $\disj_{k,d}$ with sample complexity $n=\tilde{\Theta}\left( \frac{1}{\alpha\eps} \cdot k^{1.5}\log d \right)$.
\end{example}

The reason for the dependency in $k^{1.5}$ in the above example, is that for the privacy analysis we wanted to ensure that each iteration was differentially private with parameter $\approx\eps/\sqrt{k}$ (because when composing $\ell$ differentially private mechanisms the privacy budget deteriorates proportionally to $\sqrt{\ell}$). This resulted in $\approx\sqrt{k}/\eps$ misclassified points per iteration (and there are $\approx k$ iterations). As we next explain, in our case, the privacy parameter does not need to deteriorate with the number of iterations, which allows us to improve the sample complexity by a $\sqrt{k}$ factor. Our approach to proving this improved bound builds on the analysis of Gupta et al.~\cite{GuptaLMRT10} for their private algorithm for set cover. The main difference is that we have both positive and negative examples, which we need to handle differently. As we next explain, this will be achieved using Step~\ref{step:noisyCount} of \texttt{SetCoverLearner}.

\begin{claim}\label{claim:SetCoverLearnerPrivacy}
Let $\eps\in(0,1)$ and $\delta<1/e$. Instantiating \texttt{SetCoverLearner} with the exponential mechanism as the selection procedure $\AAA$ with privacy parameter $\hat{\eps}=\frac{\eps}{2\ln(e/\delta)}$ (for each iteration) satisfies $(\eps,\delta)$-differential privacy.
\end{claim}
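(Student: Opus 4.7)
My plan is to establish differential privacy of the entire algorithm by analyzing the total privacy loss on two neighboring databases $S$ and $S'$ differing in one example $x^*$, adapting the analysis of Gupta et al.~\cite{GuptaLMRT10} for private set cover to the setting with both positive and negative examples. First, I would couple the two executions so they share the Laplace noise realizations $w_1,\ldots,w_m$ (where $m = 2k\log(2/\alpha)$); these are internal draws used only to form the thresholds $b_j$ and are not directly output, so the coupling is valid. Conditional on the shared noise, the output sequence $(h_1,\ldots,h_m)$ is produced by $m$ invocations of the exponential mechanism on quality functions $q_1,\ldots,q_m$, each of sensitivity at most $2$ (a single-element change in $S$ affects $\#_{h\rightarrow 0}(S^0)$ and $\#_{h\rightarrow 0}(S^1)$ by at most $1$ each, and $b_j$ by at most $1$). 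Each exponential-mechanism call is therefore $O(\hat{\eps})$-DP in isolation.

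The main obstacle is converting this per-iteration guarantee into an $O(\eps)$ bound on the cumulative privacy loss. Basic composition costs $\Theta(m\hat{\eps})$ and advanced composition $\Theta(\sqrt{m\ln(1/\delta)}\hat{\eps})$ — both much larger than $\eps$ given $\hat{\eps} = \eps/(2\ln(e/\delta))$ and $m = \Theta(k\log(1/\alpha))$. Following the Gupta et al.\ insight, iterations in which the differing example $x^*$ has already been eliminated from the state contribute zero to the privacy loss, because the conditional distributions over $h_j$ in the two coupled executions then coincide. It therefore suffices to bound the number of ``active'' iterations in which $x^*$ is still present. The noisy threshold $b_j$ with its safety margin is designed precisely so that, with high probability over the Laplace noise, each selected $h_j$ satisfies $q_j(h_j) \geq -\lambda$ for a small $\lambda$, which (by the argument of Claim~\ref{claim:smallEmpiricalError}) forces $h_j$ to cover $\Omega(|S^0_j|/k)$ negative examples per iteration.

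To conclude, I would argue that with probability at least $1-\delta$ over the internal randomness the cumulative privacy loss of the exp mech calls is bounded by $O(\hat{\eps}\cdot \ln(e/\delta)) = O(\eps)$, yielding the claimed $(\eps,\delta)$-DP guarantee. The most delicate step I anticipate is the case analysis between positive and negative $x^*$: when $x^*$ is negative, elimination is tied directly to coverage of $S^0$, but when $x^*$ is positive the $-\#_{h\rightarrow 0}(S^1)$ term in $q$ makes any $h_j$ with $h_j(x^*) = 0$ heavily penalized, so one must separately argue that such selections are sufficiently rare to make their cumulative contribution negligible. Handling both cases and adding the (already concentrated) contribution of the Laplace noises — absorbed into the choice of constants inside $\hat{\eps}$ — then gives the final $(\eps,\delta)$-DP guarantee.
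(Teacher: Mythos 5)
Your outline correctly identifies the Gupta et al.\ insight that iterations after $x^*$ is eliminated contribute nothing, and it gestures at the right overall shape (a $1-\delta$ fraction of the randomness has cumulative loss $O(\hat\eps\ln(e/\delta))$). But three steps as you describe them do not work, and they are exactly where the paper's proof does something more specific.

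\emph{Coupling.} You propose to share the Laplace noises $w_1,\dots,w_m$ across the two executions. The paper does this only in the positive case. In the negative case ($S'=S\cup\{(x^*,0)\}$) it instead couples $w'_j = w_j - 1$ for $j\le t$ and $w'_j=w_j$ for $j>t$. This shift is not cosmetic: with shared noise, $|S^0|$ differs by one between $S$ and $S'$, so $b_j$ differs and hence \emph{every} hypothesis $f$ (including those with $f(x^*)=1$) changes its score $q(f)$ by $1/k$ per iteration, for up to $t\le 2k\log(2/\alpha)$ iterations. The shift coupling makes the scores of all $f$ with $f(x^*)=1$ exactly equal in the two executions, so that only hypotheses with $f(x^*)=0$ contribute, which is what makes the Gupta-style accounting apply. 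The price of the shift is a single extra $e^\eps$ factor from the Laplace density, which is why the noise scale is set to $\frac{2k}{\eps}\log\frac{2}{\alpha}$.

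\emph{Bounding the ``active'' iterations.} You propose to bound the \emph{number} of iterations before $x^*$ is removed using the coverage rate from Claim~\ref{claim:smallEmpiricalError} (each $h_j$ covers $\Omega(|S^0|/k)$ of $S^0$). This does not work: covering a constant fraction of $S^0$ says nothing about when the single point $x^*$ is covered, and for a \emph{positive} $x^*$ there is no elimination argument at all---eliminating it is supposed to be rare, not fast. In the worst case $x^*$ survives all $m$ rounds. The paper never bounds the number of active iterations; instead it bounds the \emph{cumulative probability mass} $\sum_j p_j(S,w_j)\cdot\1\{h_1(x^*)=\cdots=h_j(x^*)=1\}$, where $p_j$ is the conditional probability of selecting an $h$ with $h(x^*)=0$ at round $j$. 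Lemma~\ref{lemma:GuptaRandom} (from Gupta et al.) says this sum exceeds $\ln(1/\delta)$ with probability at most $\delta$, regardless of how many rounds are active; the intuition is that if some $p_j$ is large, the process is likely to stop contributing soon, so the running total self-limits. You will not get this from a coverage or utility argument.

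\emph{Case split.} The two cases are handled by genuinely different arguments, not by a unified ``elimination is slow/fast'' argument. In the positive case, adding $(x^*,1)$ can only \emph{decrease} every score (it only affects the $-\#_{h\to 0}(S^1)$ term, and only for $h$ with $h(x^*)=0$), and this monotonicity lets the normalization factors telescope to at most $e^{\hat\eps}$ directly with shared noise---no probabilistic lemma needed in that direction. In the negative case (and in the reverse inequality of the positive case), the score of $h$ with $h(x^*)=0$ can go up, and there the Gupta lemma together with the noise-shift coupling is the only thing controlling the product $\prod_j\bigl(1+(e^{\hat\eps}-1)p_j\bigr)$. Your description has the roles somewhat reversed (claiming the positive case is the delicate one via a rarity argument), and, more importantly, does not supply the mechanism (the $\ln(1/\delta)$-good/bad partition and Lemma~\ref{lemma:GuptaRandom}) that turns ``rare'' into a $\delta$-sized failure probability in the DP inequality.
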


We first present an intuitive (and oversimplified) overview of the proof. 
Consider two neighboring databases $S$ and $S'$ such that $S'=S\cup\{(x^*,\sigma^*)\}$, and let us focus this intuitive overview on the case where $\sigma^*=0$. Fix a possible output $\vec{h}=(h_1,h_2,\dots,h_{2k\log\frac{2}{\alpha}})$ of \texttt{SetCoverLearner}. We will analyze the ratio
\begin{eqnarray}
\frac{\Pr[\texttt{SetCoverLearner}(S)=\vec{h}]}{\Pr[\texttt{SetCoverLearner}(S')=\vec{h}]}.
\label{eq:IntuitiveRatio}
\end{eqnarray}
Let $t$ be such that $h_t$ is the first hypothesis in this output vector satisfying $h_t(x^*)=0$. Observe that after the $t$th iteration, the executions on $S$ and on $S'$ continue exactly the same, since $(x^*,\sigma^*)$ is removed from $S'$ during the $t$th iteration (because in every iteration we remove all input elements on which the selected hypothesis evaluates to 0). Intuitively, if $t$ is small then we only need to pay (in the privacy analysis) for a small number of iterations. In general, however, $t$ might be as large as $2k\log\frac{2}{\alpha}$, and accounting for that many iterations in the privacy analysis is exactly what we are trying to avoid.

Recall that each iteration $j$ of \texttt{SetCoverLearner} draws a random noise $w_j$ from $\left\lfloor\Lap\left(\frac{2k}{\eps}\log \frac{2}{\alpha}\right)\right\rfloor$. Let us denote these noises as they are in the execution on $S$ as $\vec{w}=(w_1,\dots,w_{2k\log\frac{2}{\alpha}})$ and in the execution on $S'$ as $\vec{w}'=(w'_1,\dots,w'_{2k\log\frac{2}{\alpha}})$. Furthermore, let us assume that $w'_j=w_j-1$ for every $j\leq t$ and that $w'_j=w_j$ for every $j>t$. By the properties of the Laplace distribution, this assumption distorts our bound on the ratio in expression~(\ref{eq:IntuitiveRatio}) by at most an $e^\eps$ factor. (In a sense, for these random noises we {\em do} account for all $2k\log\frac{2}{\alpha}$ potential iterations by sampling random noises with larger variance. However, this larger variance is mitigated by the fact that in the quality function $q$ we divide noises by $k$, and hence, we {\em do not} incurr an increase of $\poly(k)$ in the sample complexity due to this issue.)

We have already established that after the $t$th iteration, the two executions are identical. In addition, due to our assumption on $\vec{w}$ and $\vec{w}'$, during the first $t$ iterations, the only hypotheses with different qualities (between the execution on $S$ and on $S'$) or those hypotheses that label $x^*$ as 0. This is because if a hypothesis $h$ labels $x^*$ as 1, then $(x^*,0)$ only effects the quality $q(h)$ via the noisy estimation for the size of $S^0$ (denoted as $b_j$ in the algorithm), which by our assumption on $\vec{w}$ and $\vec{w}'$ is the same in the two executions (because the difference in the noise cancels out the difference in $|S^0|$). To summarize, after conditioning on $\vec{w}$ and $\vec{w}'$, the additional example $(x^*,0)$ causes the two executions to differ only in their first $t$ iterations, and within these $t$ iterations it affects only the qualities of the hypotheses that label $x^*$ as zero. This can be formalized to bound the ratio in expression~(\ref{eq:IntuitiveRatio}) by $\lesssim \prod_{j=1}^{t}\exp\left( \eps\cdot p_j \right)$, where $p_j$ is the probability that a hypothesis that labels $x^*$ as 0 is chosen at step $j$ of the algorithm. The proof then concludes by arguing that if these probabilities $\{p_j\}$ are small then they reduce our privacy costs (since they multiply $\eps$), and if these probabilities $\{p_j\}$ are large then the index $t$ should be small (since we are likely to identify a hypothesis that labels $x^*$ as zero quickly, and $t$ is the index of the first such hypothesis), and therefore we must only account for the privacy loss incurred during a small number of iterations. We now proceed with the formal proof.

\begin{proof}[Proof of Claim~\ref{claim:SetCoverLearnerPrivacy}]
Let $S$ and $S'$ be two neighboring databases such that $S\triangle S'=\{(x^*,\sigma^*)\}$. 
Fix a possible output of \texttt{SetCoverLearner} $\vec{h}=(h_1,h_2,\dots,h_{2k\log\frac{2}{\alpha}})$, and let $q_{j,S,w_j}(h)$
denote the quality $q(h)$ of a hypothesis $h\in H$ during the $j$th iteration of the algorithm when running on $S$, conditioned on $h_1,\dots,h_{j-1}$ being chosen in the previous steps and on the value of $w_j$. 
Let $t$ be such that $h_t$ is the first hypothesis in $\vec{h}$ satisfying $h_t(x^*)=0$.

\paragraph{Case (a):} $S'=S\cup\{(x^*,\sigma^*)\}$ and $\sigma^*=1$. 
Fix a noise vector $\vec{w}$. We can calculate
\begin{eqnarray*}
\frac{\Pr[\texttt{SetCoverLearner}(S)=\vec{h}|\vec{w}]}{\Pr[\texttt{SetCoverLearner}(S')=\vec{h}|\vec{w}]} 
&=& \prod_{j=1}^{2k\log\frac{2}{\alpha}}\left( \frac{\exp(\hat{\eps}\cdot q_{j,S,w_j}(h_j))/\left(\sum_{f\in H}\exp(\hat{\eps}\cdot q_{j,S,w_j}(f))\right)}{\exp(\hat{\eps}\cdot q_{j,S',w_j}(h_j))/\left(\sum_{f\in H}\exp(\hat{\eps}\cdot q_{j,S',w_j}(f))\right)} \right)\\
&=& \frac{\exp(\hat{\eps}\cdot q_{t,S,w_t}(h_t))}{\exp(\hat{\eps}\cdot q_{t,S',w_t}(h_t))}\cdot\prod_{j=1}^{t}\left(\frac{\sum_{f\in H}\exp(\hat{\eps}\cdot q_{j,S',w_j}(f))}{\sum_{f\in H}\exp(\hat{\eps}\cdot q_{j,S,w_j}(f))}\right)
\end{eqnarray*}
After $t$, the remaining elements in $S$ and $S'$ are identical, and all subsequent terms cancel. Moreover, except for the $t$th term, the numerators of both the top and the bottom expressions cancel, since all the relevant scores are equal.

We are currently assuming that $S'=S\cup\{(x^*,\sigma^*)\}$ and $\sigma^*=1$. Hence, for every hypothesis $f$ and every step $j$ we have that  $q_{j,S,w_j}(f)-1\leq q_{j,S',w_j}(f) \leq q_{j,S,w_j}(f)$, since adding a positive example to the database can decrease the quality by at most 1 (and it cannot increase the quality). Hence, the first term above is at most $\exp(\hat{\eps})$, and the second term is at most 1. 
As this holds for every possible value of the noise vector $\vec{w}$, we get that
$$
\frac{\Pr[\texttt{SetCoverLearner}(S)=\vec{h}]}{\Pr[\texttt{SetCoverLearner}(S')=\vec{h}]} \leq\exp(\hat\eps).
$$

\paragraph{Case (b):} $S'=S\cup\{(x^*,\sigma^*)\}$ and $\sigma^*=0$.
Fix a noise vector $\vec{w}$, and let $\vec{w}'$ be such that $w'_j=w_j-1$ for every $j\leq t$ and $w'_j=w_j$ for every $j> t$. (Recall that $t$ is the index of the first hypothesis in the output vector $\vec{h}$ that labels $x^*$ as 0.) We have that
\begin{eqnarray*}
\frac{\Pr[\texttt{SetCoverLearner}(S)=\vec{h}|\vec{w}]}{\Pr[\texttt{SetCoverLearner}(S')=\vec{h}|\vec{w}']} 
&=& \prod_{j=1}^{2k\log\frac{2}{\alpha}}\left( \frac{\exp(\hat{\eps}\cdot q_{j,S,w_j}(h_j))/\left(\sum_{f\in H}\exp(\hat{\eps}\cdot q_{j,S,w_j}(f))\right)}{\exp(\hat{\eps}\cdot q_{j,S',w'_j}(h_j))/\left(\sum_{f\in H}\exp(\hat{\eps}\cdot q_{j,S',w'_j}(f))\right)} \right)\\
&=& \frac{\exp(\hat{\eps}\cdot q_{t,S,w_t}(h_t))}{\exp(\hat{\eps}\cdot q_{t,S',w'_t}(h_t))}\cdot\prod_{j=1}^{t}\left(\frac{\sum_{f\in H}\exp(\hat{\eps}\cdot q_{j,S',w'_j}(f))}{\sum_{f\in H}\exp(\hat{\eps}\cdot q_{j,S,w_j}(f))}\right)
\end{eqnarray*}
As before, after $t$ the remaining elements in $S$ and $S'$ are identical, and all subsequent terms cancel (recall that $w_j=w'_j$ for every $j>t$). 
In addition, due to our choice of $w'_j=w_j-1$ for every $j\leq t$, we again get that, except for the $t$th term, the numerators of both the top and the bottom expression cancel, since all the relevant scores are equal.

We are currently analyzing the case where $S'=S\cup\{(x^*,\sigma^*)\}$ and $\sigma^*=0$. Hence, the first term above is $\exp(-\hat{\eps})<1$, because $q_{t,S',w'_t}(h_t)=q_{t,S,w_t}(h_t)+1$. Moreover, for every $j\leq t$ we have that $w'_j=w_j-1$. Hence, for every $j\leq t$ and every hypothesis $f$ s.t.\ $f(x^*)=1$ we have $q_{j,S',w'_t}(f)=q_{j,S,w_t}(f)$. Also, for every $j\leq t$ and every hypothesis $f$ s.t.\ $f(x^*)=0$ we have $q_{j,S',w'_t}(f)=q_{j,S,w_t}(f)+1$. Therefore we have
\begin{eqnarray}
&&\frac{\Pr[\texttt{SetCoverLearner}(S)=\vec{h}|\vec{w}]}{\Pr[\texttt{SetCoverLearner}(S')=\vec{h}|\vec{w}']}\nonumber\\[1em] 
&&\leq \prod_{j=1}^{t}\left( \frac{(\exp(\hat\eps)-1)\cdot\sum\limits_{\substack{f\in H:\\f(x^*)=0}}\exp(\hat{\eps}\cdot q_{j,S,w_j}(f))+\sum\limits_{f\in H}\exp(\hat{\eps}\cdot q_{j,S,w_j}(f))}{\sum\limits_{f\in H}\exp(\hat{\eps}\cdot q_{j,S,w_j}(f))} \right)\nonumber\\[1em]
&&=\prod_{j=1}^{t}\left( 1+ (\exp(\hat\eps)-1)\cdot p_j(S,w_j) \right)\label{eq:1}
\end{eqnarray}
where $p_j(S,w_j)$ is the probability that a hypothesis that labels $x^*$ as 0 is chosen at step $j$ of the algorithm running on $S$, conditioned on picking the hypotheses $h_1,\dots,h_{j-1}$ in the previous steps, and on the noise $w_j$.

For an instance $S$ and an example $x^*$, we say that an output $\vec{h}=(h_1,\dots,h_{2k\log\frac{2}{\alpha}})$ is {\em $\lambda$-bad} if $\sum_{j=1}^{2k\log\frac{2}{\alpha}} p_j(S,w_j)\cdot \1{\{h_1(x^*)=h_2(x^*)=\dots=h_j(x^*)=1\}}>\lambda$, where $p_j(S,w_j)$ is as defined above. We call the output $\vec{h}$ {\em $\lambda$-good} otherwise. We first consider the case when the output $\vec{h}$ is $\ln(1/\delta)$-good. By the definition of $t$ we have
$$
\sum_{j=1}^{t-1} p_j(S,w_j)\leq\ln(1/\delta).
$$
Then we can bound the expression (\ref{eq:1}) by
\begin{eqnarray*}
\frac{\Pr[\texttt{SetCoverLearner}(S)=\vec{h}|\vec{w}]}{\Pr[\texttt{SetCoverLearner}(S')=\vec{h}|\vec{w}']} 
&\leq& \prod_{j=1}^{t}\left( 1+ (\exp(\hat\eps)-1)\cdot p_j(S,w_j) \right)\\
&\leq& \exp\left( 2\hat\eps\cdot\sum_{j=1}^t p_j(S,w_j) \right)\\
&\leq& \exp\left( 2\hat\eps\cdot\left(\ln(1/\delta)+p_t(S,w_t)\right) \right)\\
&\leq& \exp\left( 2\hat\eps\cdot\left(\ln(1/\delta)+1\right) \right)\\
&\leq&\exp(\eps).
\end{eqnarray*}
So, for every $\ln(1/\delta)$-good output $\vec{h}$ for $S$ we have
\begin{align*}
&\Pr[\texttt{SetCoverLearner}(S)=\vec{h}]\\
&=\sum_{\substack{w_1,\dots,w_t,\\w_{t+1},\dots,w_{2k\log\frac{2}{\alpha}}}}\Pr\left[
\begin{array}{l}
	w_1,\dots,w_t,\\
	w_{t+1},\dots,w_{2k\log\frac{2}{\alpha}}
\end{array}
\right]\cdot \Pr\left[\texttt{SetCoverLearner}(S)=\vec{h}\left|\begin{array}{l}
	w_1,\dots,w_t,\\
	w_{t+1},\dots,w_{2k\log\frac{2}{\alpha}}
\end{array}\right.\right]\\
&=\sum_{\substack{w_1,\dots,w_t,\\w_{t+1},\dots,w_{2k\log\frac{2}{\alpha}}}}\Pr\left[
\begin{array}{l}
	w_1+1,\dots,w_t+1,\\
	w_{t+1},\dots,w_{2k\log\frac{2}{\alpha}}
\end{array}
\right]\cdot \Pr\left[\texttt{SetCoverLearner}(S)=\vec{h}\left|\begin{array}{l}
	w_1+1,\dots,w_t+1,\\
	w_{t+1},\dots,w_{2k\log\frac{2}{\alpha}}
\end{array}\right.\right]\\
&\leq\sum_{\substack{w_1,\dots,w_t,\\w_{t+1},\dots,w_{2k\log\frac{2}{\alpha}}}}e^{\eps}\cdot\Pr\left[
\begin{array}{l}
	w_1,\dots,w_t,\\
	w_{t+1},\dots,w_{2k\log\frac{2}{\alpha}}
\end{array}
\right]\cdot \Pr\left[\texttt{SetCoverLearner}(S)=\vec{h}\left|\begin{array}{l}
	w_1+1,\dots,w_t+1,\\
	w_{t+1},\dots,w_{2k\log\frac{2}{\alpha}}
\end{array}\right.\right]\\
&\leq\sum_{\substack{w_1,\dots,w_t,\\w_{t+1},\dots,w_{2k\log\frac{2}{\alpha}}}}e^{\eps}\cdot\Pr\left[
\begin{array}{l}
	w_1,\dots,w_t,\\
	w_{t+1},\dots,w_{2k\log\frac{2}{\alpha}}
\end{array}
\right]\cdot e^{\eps}\cdot\Pr\left[\texttt{SetCoverLearner}(S')=\vec{h}\left|\begin{array}{l}
	w_1,\dots,w_t,\\
	w_{t+1},\dots,w_{2k\log\frac{2}{\alpha}}
\end{array}\right.\right]\\
&=e^{2\eps}\cdot \Pr[\texttt{SetCoverLearner}(S')=\vec{h}]
\end{align*}

As for a $\ln(1/\delta)$-bad output, the following lemma shows the probability that $\texttt{SetCoverLearner}(S)$ outputs a $\ln(1/\delta)$-bad output (for $S$) is at most $\delta$.

\begin{lemma}[\cite{GuptaLMRT10}]\label{lemma:GuptaRandom}
Consider the following $n$ round probabilistic process. In each round, an adversary chooses a $p_j\in [0, 1]$ possibly based on the first $(j-1)$ rounds and a coin is tossed with heads probability $p_j$. Let $Z_j$ be the indicator for the the event that no coin comes up heads in the first $j$ steps. Let $Y$ denote the random variable $\sum_{j=1}^n p_j Z_j$.
Then for any $y$ we have $\Pr[Y>y]\leq\exp(-y)$.
\end{lemma}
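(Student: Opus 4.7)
The plan is a standard exponential-martingale (``reciprocal survival probability'') argument. First I would reparametrize $Y$ via the first heads time: let $T$ denote the first round in which a coin comes up heads (with $T:=n+1$ if no coin does), and let $P_j:=\sum_{i=1}^j p_i$. Since $Z_j=1$ iff $j<T$, we have $Y=P_{\min(T-1,n)}$. Next, define the stopping time $\tau:=\min\{j\le n:P_j>y\}$ (and $\tau:=\infty$ if no such $j$ exists), and note a short case analysis gives $\{Y>y\}\subseteq\{\tau\le n,\,T>\tau\}$: if $\tau=\infty$ then $Y\le P_n\le y$, while if $\tau\le n$ then $Y>y$ forces $\min(T-1,n)\ge\tau$, hence $T>\tau$.

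The core step is to introduce the process $R_j:=Z_j\big/\prod_{i=1}^j(1-p_i)$, with $R_0:=1$ (we may assume without loss of generality that each $p_i<1$, since $p_i=1$ deterministically terminates the process). Conditioned on the first $j-1$ rounds, $\E[Z_j]=Z_{j-1}(1-p_j)$, and a one-line computation then shows that $(R_j)$ is a nonnegative martingale with $\E[R_j]=1$. Optional stopping at the bounded stopping time $\tau\wedge n\le n$ therefore yields $\E[R_{\tau\wedge n}]=1$.

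Finally, on the event $\{\tau\le n,\,T>\tau\}$ we have $Z_\tau=1$ and $P_\tau>y$, so using $1-p\le e^{-p}$,
$$R_\tau=\prod_{i=1}^\tau\frac{1}{1-p_i}\ge e^{P_\tau}>e^y.$$
Hence $\{Y>y\}\subseteq\{R_{\tau\wedge n}>e^y\}$, and Markov's inequality applied to the nonnegative random variable $R_{\tau\wedge n}$ yields $\Pr[Y>y]\le e^{-y}$. The only step of real substance is spotting the martingale $R_j$ (the reciprocal of the unconditional survival probability), which lets us trade the price of surviving past time $\tau$ for a factor $e^{-P_\tau}\le e^{-y}$; handling the edge case $\tau=\infty$ and verifying optional stopping are routine since $\tau\wedge n$ is bounded.
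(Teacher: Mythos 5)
The paper does not include a proof of this lemma: it is imported verbatim from Gupta et al.~\cite{GuptaLMRT10} and used as a black box inside the proof of Claim~\ref{claim:SetCoverLearnerPrivacy}. There is therefore no in-paper proof to compare against, and I review your argument on its own.

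Your proof is correct. The reparametrization $Y = P_{\min(T-1,n)}$, the event inclusion $\{Y>y\}\subseteq\{\tau\le n,\ T>\tau\}$, the martingale property of $R_j = Z_j/\prod_{i\le j}(1-p_i)$ (which uses that $p_j$, and hence $P_j$ and $\tau$, are determined by the first $j-1$ rounds, so $\tau$ is a predictable stopping time), optional stopping at the bounded time $\tau\wedge n$, the pointwise estimate $R_\tau\ge e^{P_\tau}>e^y$ via $1-p\le e^{-p}$, and the closing Markov step all check out. One caveat worth tightening: your ``WLOG $p_i<1$'' is not quite free, since if the adversary adaptively plays $p_i=1$ the ratio $R_i$ becomes $0/0$; making this rigorous requires a short coupling or limiting argument in which $p_i=1$ is replaced by $1-\epsilon$ and $\epsilon\to 0$. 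You can sidestep the issue entirely by running the same scheme on the nonnegative \emph{supermartingale} $M_j := Z_j\,e^{P_j}$: since $\E[Z_j\mid\mathcal{F}_{j-1}]=Z_{j-1}(1-p_j)$ and $e^{p_j}(1-p_j)\le 1$, one gets $\E[M_j\mid\mathcal{F}_{j-1}]\le M_{j-1}$ with no division, hence $\E[M_{\tau\wedge n}]\le 1$ by optional stopping, and on $\{Y>y\}$ one has $M_{\tau\wedge n}=M_\tau=e^{P_\tau}>e^y$, so Markov gives $\Pr[Y>y]\le e^{-y}$ uniformly over all $p_i\in[0,1]$.
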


Specifically, to map our setting to that of Lemma~\ref{lemma:GuptaRandom}, consider running \texttt{SetCoverLearner} as follows. When choosing a hypothesis $h_j$ in step $j$, the algorithm first tosses a
coin whose heads probability is $p_j(S,w_j)$ to decide whether to pick a hypothesis that labels $x^*$ as 0 or not. Then it uses a
second source of randomness to determine the hypothesis $h_j$ itself, sampling with the appropriate conditional probabilities based on the outcome of the coin. 

Thus, for any set $F$ of outcomes, we have
\begin{eqnarray*}
&&\Pr[\texttt{SetCoverLearner}(S)\in F] = \sum_{\vec{f}\in F}\Pr\left[\texttt{SetCoverLearner}(S)=\vec{f}\right]\\
&&= \sum_{\substack{\vec{f}\in F: \vec{f} \text{ is}\\ \ln(1/\delta)\text{-bad}\\ \text{for } S}}\Pr\left[\texttt{SetCoverLearner}(S)=\vec{f}\right] + \sum_{\substack{\vec{f}\in F: \vec{f} \text{ is}\\ \ln(1/\delta)\text{-good}\\ \text{for } S}}\Pr\left[\texttt{SetCoverLearner}(S)=\vec{f}\right]\\
&&\leq \delta + \sum_{\substack{\vec{f}\in F: \vec{f} \text{ is}\\ \ln(1/\delta)\text{-good } \text{for } S}}e^{2\eps}\cdot\Pr\left[\texttt{SetCoverLearner}(S')=\vec{f}\right]\\
&&\leq e^{2\eps}\cdot\Pr[\texttt{SetCoverLearner}(S')\in F]+\delta.
\end{eqnarray*}

A similar analysis holds for the case where $S=S'\cup\{(x^*,\sigma^*)\}$.
\end{proof}

For example, by combining Claim~\ref{claim:SetCoverLearnerPrivacy} with Claim~\ref{claim:smallEmpiricalError}, we get improved learners for conjunctions and disjunctions:

\begin{theorem}
There exist efficient $(\eps,\delta)$-differentially private $(\alpha,\beta)$-PAC learners for $\conj_{k,d}$ and $\disj_{k,d}$ with sample complexity $n=\tilde{O}\left( \frac{1}{\alpha\eps} \cdot k \log d\right)$.
\end{theorem}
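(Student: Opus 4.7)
The plan is to simply plug the exponential mechanism into algorithm \texttt{SetCoverLearner} with $H$ chosen to be the class of single literals, and then to combine the utility bound of Theorem~\ref{thm:C_H_k} with the improved privacy bound of Claim~\ref{claim:SetCoverLearnerPrivacy}. Concretely, for $\conj_{k,d}$ we take $C=\conj_{k,d}$ and $H=\conj_{1,d}$, so that $C\subseteq H^{\wedge k}$, $\VC(H)=1$, and $|H|=2d$. For $\disj_{k,d}$, an identical argument works after modifying \texttt{SetCoverLearner} to use disjunctions in the obvious symmetric way (or, equivalently, reducing to the conjunctions case by negating labels and literals).

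The selection procedure $\AAA$ is the exponential mechanism with privacy parameter $\hat{\eps}=\eps/(2\ln(e/\delta))$ per iteration, with the quality function $q$ defined in \texttt{SetCoverLearner}. Note that $q$ has sensitivity at most $1$, since changing one example in $S$ changes each of $\#_{h\to 0}(S^0)$, $\#_{h\to 0}(S^1)$ and $|S^0|$ by at most $1$. First, by Claim~\ref{claim:SetCoverLearnerPrivacy}, this instantiation is $(\eps,\delta)$-differentially private. Second, by Proposition~\ref{prop:expMech}(ii), in each iteration the mechanism outputs $h_j$ with
\[
q(h_j)\geq \max_{f\in H}\{q(f)\}-\lambda
\qquad\text{for}\qquad
\lambda=O\!\left(\frac{\log(1/\delta)}{\eps}\cdot\log\!\left(\frac{k\,d\,\log(1/\alpha)}{\beta}\right)\right),
\]
with failure probability at most $\beta/(4k\log(2/\alpha))$, so a union bound over the $2k\log(2/\alpha)$ iterations yields that the hypothesis assumption of Theorem~\ref{thm:C_H_k} holds with probability $\geq 1-\beta/2$.

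Plugging into the bound of Theorem~\ref{thm:C_H_k} with $\VC(H)=1$ and the $\lambda$ above gives
\[
n=\Theta\!\left(\frac{k\log(1/\alpha)}{\alpha}\left(\log k+\lambda+\frac{1}{\eps}\log\!\left(\tfrac{k\log(1/\alpha)}{\beta}\right)\right)\right)
=\tilde{O}\!\left(\frac{1}{\alpha\eps}\cdot k\log d\right),
\]
where the $\tilde O$ hides polylogarithmic dependence on $1/\alpha,1/\beta,1/\delta$ and $k$; the key point is that only a single factor of $\log d$ (coming from $\log|H|$ inside $\lambda$) appears, and no factor of $\sqrt{k}$ appears, because composition is handled via the refined per-iteration argument in Claim~\ref{claim:SetCoverLearnerPrivacy} rather than advanced composition. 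Efficiency is immediate: each iteration enumerates the $2d$ literals to sample from the exponential mechanism, and there are $O(k\log(1/\alpha))$ iterations.

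The main technical work, namely the improved privacy analysis that saves the $\sqrt{k}$ factor, has already been done in Claim~\ref{claim:SetCoverLearnerPrivacy}; the only thing left is this routine verification that the exponential mechanism's utility guarantee, combined with $\VC(\conj_{1,d})=1$ and $|\conj_{1,d}|=2d$, yields the stated sample complexity through Theorem~\ref{thm:C_H_k}. Thus there is no further obstacle beyond bookkeeping of the logarithmic factors hidden in $\tilde O$.
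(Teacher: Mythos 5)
Your proposal is correct and follows essentially the same route as the paper: the paper obtains this theorem precisely by instantiating \texttt{SetCoverLearner} with the exponential mechanism as the selection procedure, using Claim~\ref{claim:SetCoverLearnerPrivacy} for the improved privacy accounting (avoiding the $\sqrt{k}$ loss of advanced composition) and Claim~\ref{claim:smallEmpiricalError}/Theorem~\ref{thm:C_H_k} for utility. Your bookkeeping of $\VC(\conj_{1,d})=1$, $|\conj_{1,d}|=2d$, the per-iteration $\lambda$ from Proposition~\ref{prop:expMech}(ii), and the resulting $\tilde{O}(\frac{1}{\alpha\eps}k\log d)$ sample complexity is all in order.
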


\section{Convex Polygons in the Plane}\label{sec:convexPolygons}

In this section we show how our generic construction from the previous section applies to {\em convex} polygons in a (discrete version of the) Euclidean plane. This is an important step towards our construction for (not necessarily convex) polygons.

We represent a convex polygon with $k$ edges as the intersection of $k$ halfplanes. 
A halfplane over $\R^2$ can be represented using 3 parameters $a,b,c\in\R$ with $f_{a,b,c}(x,y)=1$ iff $cy\geq ax + b$. Denote the set of all such halfplanes over $\R^2$ as 
$$\halfplane=\{f_{a,b,c} : a,b,c\in\R\},\quad\text{where } f_{a,b,c}(x,y)=1 \text{ iff } cy\geq ax + b.$$
We can now define the class of convex polygons with $k$ edges over $\R^2$ as
$$\ckgon=\halfplane^{\wedge k}.$$

For a parameter $d\in\N$, let $X_d=\{0,1,2,\dots,d\}$, and let $X_d^2=(X_d)^2$ denote a discretization of the Euclidean plane, in which each axis consists of the points in $X_d$. 
We assume that our examples are from $X_d^2$.
Hence, as explained next, we are able to represent a halfplane using only two real parameters $a,b\in\R$ and a bit $z\in\{\pm1\}$.
The parameters $a$ and $b$ define the line $y=ax+b$, and the parameter $z$ determines whether the halfplane is ``above'' or ``below'' the line. In other words, $f_{a,b,z}(x,y)=1$ iff $zy \geq z(ax + b)$. Even though in this representation we do not capture vertical lines, for our purposes, vertical lines will not be needed. The reason is that when the examples come from the discretization $X_d^2$, a vertical line can always be replaced with a non-vertical line such that the corresponding halfplanes behave exactly the same on all of $X_d^2$. Moreover, since the discretization $X^2_d$ is finite, it suffices to consider {\em bounded} real valued parameters $a,b\in[-2d^2,2d^2]$ (see Observation~\ref{obs:halfplane} below). Actually, by letting $a$ reside in a bigger range, we can encode the bit $z$ in $a$, and represent a halfplane using only two real numbers. 
We denote the set of all such halfplanes as
\begin{eqnarray*}
&&\halfplane_d=\left\{f_{\hat{a},b} : -2d^2\leq \hat{a}\leq6d^2,\; -2d^2\leq b\leq2d^2 \right\},\\
&&\text{where }f_{\hat{a},b}(x,y)=1 \text{ iff } zy \geq z(ax + b) \text{ for } a=\hat{a}-4d^2\cdot\1_{\{\hat{a}>2d^2\}} \text{ and } z=1-2\cdot\1_{\{\hat{a}>2d^2\}}.
\end{eqnarray*}

\begin{observation}\label{obs:noZ}
Let $a,b\in[-2d^2,2d^2]$ and $z\in\{\pm1\}$, and define $f_{a,b,z}(x,y)=1$ iff $zy\geq z(ax+b)$. Then, there exists an $\hat{f}\in\halfplane_d$ such that $\hat{f}\equiv f_{a,b,z}(x,y)$.
\end{observation}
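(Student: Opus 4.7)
The plan is a direct case analysis on $z \in \{\pm 1\}$. For each case, I will explicitly construct $\hat{a}$ and reuse $b$, then verify that the indicator $\mathbf{1}_{\{\hat{a} > 2d^2\}}$ in the definition of $\halfplane_d$ unpacks to give back the intended slope and sign.

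First, I would handle the case $z = +1$ by setting $\hat{a} := a$ and keeping the same $b$. Since $a \in [-2d^2, 2d^2]$, we have $\hat{a} \leq 2d^2$, so $\mathbf{1}_{\{\hat{a} > 2d^2\}} = 0$, and reading off the definition of $\halfplane_d$ gives slope $\hat{a} = a$ and sign $z_{\hat{f}} = 1$. Then $\hat{f}(x,y) = 1$ iff $y \geq ax + b$, which matches $f_{a,b,+1}(x,y) = 1$ iff $y \geq ax + b$.

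Next, for $z = -1$ with $a \in (-2d^2, 2d^2]$, I would set $\hat{a} := a + 4d^2 \in (2d^2, 6d^2]$, so the indicator now evaluates to $1$. Unpacking the definition gives slope $\hat{a} - 4d^2 = a$ and sign $z_{\hat{f}} = -1$, so $\hat{f}(x,y) = 1$ iff $-y \geq -(ax+b)$, i.e., $y \leq ax + b$, matching $f_{a,b,-1}$. This covers all parameter values except the single boundary configuration $(a, z) = (-2d^2, -1)$, where the strict inequality $\hat{a} > 2d^2$ in the indicator prevents us from using $\hat{a} = 2d^2$ directly.

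The only real obstacle, then, is this edge case, which I would handle by a small perturbation argument exploiting that examples live on the finite grid $X_d^2 \subseteq [0,d]^2$. I would take $\hat{a} := 2d^2 + \epsilon$, corresponding to a perturbed slope $a' = -2d^2 + \epsilon$, and verify that for sufficiently small $\epsilon > 0$ the resulting $\hat{f}$ classifies every grid point identically to $f_{-2d^2, b, -1}$. Concretely, for grid points $(x,y)$ with $y < -2d^2 x + b$ (labeled $1$), this inequality is preserved under a smaller perturbation of the slope; for points with $y > -2d^2 x + b$ (labeled $0$), I would use that the finite set of positive gaps $\{y + 2d^2 x - b : (x,y) \in X_d^2,\ y > -2d^2 x + b\}$ has a positive minimum $m$, and choose $\epsilon < m/d$; and for points lying exactly on the line (both classified $1$), the perturbation shifts them to $y < a' x + b$ since $x \geq 0$, so they remain classified $1$. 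This completes the observation.
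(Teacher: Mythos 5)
Your construction ($\hat{a}=a$ for $z=+1$ and $\hat{a}=a+4d^2$ for $z=-1$) is exactly the paper's two-line proof, so the core approach is identical. What you add is a correct observation that the paper's argument silently fails at the single boundary point $(a,z)=(-2d^2,-1)$: there $\hat{a}=2d^2$ lands on the wrong side of the strict indicator $\1_{\{\hat{a}>2d^2\}}$, so the decoded halfplane has slope $2d^2$ with sign $+1$ rather than slope $-2d^2$ with sign $-1$, which is a genuinely different classifier (one is ``above'' a line, the other ``below''). Your perturbation fix is sound: since $X_d^2\subseteq[0,d]^2$ is finite and every grid point has nonnegative $x$-coordinate, replacing $a=-2d^2$ by $a'=-2d^2+\eps$ for $\eps$ smaller than (the minimum positive gap divided by $d$) leaves every grid point's label unchanged, including points exactly on the original line. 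One remark worth keeping in mind: this argument only yields equivalence as functions restricted to $X_d^2$, not on all of $\R^2$. The paper's ``$\equiv$'' is not pinned down, but agreement on $X_d^2$ is precisely what the subsequent Observation (which quantifies over $(x,y)\in X_d^2$) requires, so your reading is the operationally correct one --- and in fact it is the only reading under which the statement holds, since at $(a,z)=(-2d^2,-1)$ no $\hat{f}\in\halfplane_d$ agrees with $f_{a,b,z}$ on all of $\R^2$.
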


\begin{proof}[Proof sketch]
If $z=1$ then define $\hat{a}=a$. Otherwise, if $z=-1$ then define $\hat{a}=a+4d^2$. Observe that in both cases $f_{\hat{a},b}\in\halfplane_d$ is equivalent to $f_{a,b,z}$.
\end{proof}

\begin{observation}\label{obs:halfplane}
For every $f\in\halfplane$ there exists an $\hat{f}\in\halfplane_d$ such that for every $(x,y)\in X^2_d$ we have $f(x,y)=\hat{f}(x,y)$.
\end{observation}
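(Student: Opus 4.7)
By Observation~\ref{obs:noZ}, it suffices to find real parameters $a',b'\in[-2d^2,2d^2]$ and a sign $z'\in\{+1,-1\}$ so that the halfplane $\{(x,y) : z'y \geq z'(a'x+b')\}$ agrees with $f=f_{a,b,c}$ on every point of $X_d^2$; this then yields an $\hat f \in \halfplane_d$. I will construct such a triple by a case analysis on how the boundary line of $f$ sits relative to the grid. If $f$ is constant on $X_d^2$, a horizontal line placed just outside the grid works (e.g.\ $(a',b',z')=(0,d+1,1)$ for the all-zero case and $(0,-1,1)$ for the all-one case). Otherwise the boundary of $f$ must separate the grid, so it must pass through the square $[0,d]\times[0,d]$, and I split into subcases based on whether $c=0$.

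In the vertical case $c=0$, $f$ depends only on $x$ and induces the partition $\{x\leq t_0\}$ versus $\{x\geq t_0+1\}$ for some $t_0\in\{0,\dots,d-1\}$; I replace the vertical boundary with a near-vertical line of slope $\pm 2d$ anchored strictly between columns $t_0$ and $t_0+1$ (e.g.\ $y=2d(x-t_0-\tfrac14)$ with the appropriate side), and a direct check shows $|a'|=2d\leq 2d^2$ and $|b'|\leq 2dt_0+d/2\leq 2d^2$. In the non-vertical case $c\neq 0$, write the boundary as $y=\alpha x+\beta$ with $\alpha=a/c$ and $\beta=b/c$. If $|\alpha|\leq d$, then because the boundary enters $[0,d]^2$ we have $\beta=y_0-\alpha x_0$ for some $(x_0,y_0)\in[0,d]^2$, giving $|\beta|\leq d+d^2\leq 2d^2$, so the original parameters already lie in the admissible box (with $z'$ chosen to match the side). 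If $|\alpha|>d$, the line's per-column vertical displacement exceeds the grid's height, so there is at most one integer column $x^*\in X_d$ for which $L(x^*):=\alpha x^*+\beta$ lies in $(0,d]$; every other column is either fully contained in or fully excluded from the halfplane. I then set $\alpha'=\operatorname{sgn}(\alpha)(d+1)$ and $\beta'=L(x^*)-\alpha' x^*$, which preserves the line's value at $x^*$ and keeps every other column on the correct side; a term-by-term bound gives $|\beta'|\leq d^2+d\leq 2d^2$.

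The main obstacle is the subcase $|\alpha|>d$: I must verify that (i) the partition of $X_d^2$ truly has the claimed ``single partial column'' structure, and (ii) replacing $\alpha$ by $\pm(d+1)$ while anchoring at $x^*$ does not change the halfplane's restriction to any column $x\neq x^*$. Part (i) follows from the monotonicity of $L$ together with $|L(x+1)-L(x)|=|\alpha|>d$ exceeding the grid's height, which precludes two integer columns with $L$-value in $(0,d]$. Part (ii) reduces to observing that for integer $x\neq x^*$ the replacement value $\alpha'(x-x^*)+L(x^*)$ still lies outside $[0,d]$ on the same side as the original $L(x)$, since $(d+1)|x-x^*|\geq d+1$ while $L(x^*)\in(0,d]$. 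Finally, the $c=0$ case can be viewed as the ``no partial column'' limit of this construction and is handled analogously.
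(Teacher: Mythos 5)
Your argument is correct in structure but takes a genuinely different route from the paper. The paper's proof first perturbs ("tilts") the separating line so that, without loss of generality, it passes through two grid points $(x_1,y_1),(x_2,y_2)\in X_d^2$; then $x_1\neq x_2$ immediately forces $|a|=\bigl|\tfrac{y_2-y_1}{x_2-x_1}\bigr|\leq d$ and hence $|b|\leq d^2+d$, and $x_1=x_2$ is handled by replacing the vertical line with one through $(x_1,0)$ and $(x_1\pm\tfrac12,d)$. You instead avoid the tilt lemma entirely and do an explicit four-way case split on the slope (constant, vertical, $|\alpha|\leq d$, $|\alpha|>d$); for the steep case you exploit the "single partial column" structure and replace the slope by a controlled near-vertical one anchored at $x^*$. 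This is more elementary (no geometric general-position argument) at the cost of more bookkeeping; the paper's route packages the reduction of the steep/vertical cases into the single WLOG step. One arithmetic slip in your steep case: with $\alpha'=\mathrm{sgn}(\alpha)(d+1)$ and $x^*\leq d$, the triangle inequality gives $|\beta'|\leq d+(d+1)d=d^2+2d$, not $d^2+d$, and $d^2+2d>2d^2$ when $d=1$, so your stated bound fails at the smallest grid. The fix is harmless: take $\alpha'=\mathrm{sgn}(\alpha)\cdot d$; then $|\beta'|\leq d^2+d\leq 2d^2$ holds for all $d\geq 1$, and the separation property still holds because $L(x^*)\in(0,d]$ gives $L'(x^*-1)\leq 0$ and $L'(x^*+1)>d$ (strictly, since $L(x^*)>0$), matching the original sign pattern of every other column.
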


\begin{proof}[Proof sketch]
Let $f\in\halfplane$. 
By Observation~\ref{obs:noZ}, it suffices to show that there exists a halfplane $\hat{f}_{a,b,z}$ equivalent to $f$ of the form $\hat{f}_{a,b,z}(x,y)=1$ iff $zy\geq z(ax+b)$, where $a,b\in[-2d^2,2d^2]$. Without loss of generality, we may assume that $f$ ``touches'' two points $(x_1,y_1),(x_2,y_2)\in X^2_d$, as otherwise we could ``tilt'' $f$ to make it so without effecting the way it labels points in $X^2_d$. Hence, $f$ can be defined by the line equation $(y-y_1)(x_2-x_1)=(y_2-y_1)(x-x_1)$, together with a bit $z\in\{\pm1\}$ that determines whether the halfplane is ``above'' or ``below'' that line. First observe that if $x_1\neq x_2$, then this line equation can be rewritten as
$$y=\frac{y_2-y_1}{x_2-x_1}x+\left(y_1-x_1\frac{y_2-y_1}{x_2-x_1}\right)\triangleq ax+b,$$
where $a,b\in[-d^2,d^2]$ because $x_1,y_1,x_2,y_2\in X_d$ and $x_1\neq x_2$. That is, the halfplane $f$ can be defined as $f(x,y)=1$ iff $zy\geq z(ax+b)$, as required.
Next note that if $x_1=x_2$, then $f$ is described by the vertical line $x=x_1$ and a bit $z\in\{\pm1\}$, where $f(x,y)=1$ iff $zx\geq zx_1$. 
Now consider the line that passes through $(x_1,0)$ and $(x_1+0.5,d)$, and a line that passes through $(x_1,0)$ and $(x_1-0.5,d)$. One of these two lines, depending on $z$, defines a halfplane that splits $X_d^2$ identically to $f$. Such a line can be described as $zy=z(ax+b)$ for $a,b\in[-2d^2,2d^2]$.
\end{proof}

\begin{remark}
We think of the discretization size $d$ as a large number, e.g., $d=2^{64}$. The runtime and the sample complexity of our algorithms is at most logarithmic in $d$. 
\end{remark}

A consequence of Observation~\ref{obs:halfplane} is that, in order to learn $\ckgon$ over examples in $X^2_d$, it suffices to describe a learner for the class $\halfplane_d^{\wedge k}$ over examples in $X^2_d$. As we next explain, this can be done using our techniques from Section~\ref{sec:GenericConstruction}. Concretely, we need to specify the selection procedure used in Step~1e of algorithm \texttt{SetCoverLearner}, for privately choosing a hypothesis from $\halfplane_d$. Our selection procedure appears in algorithm \texttt{SelectHalfplane}.

\begin{algorithm*}[t]

\caption{\bf\texttt{SelectHalfplane}}\label{alg:SelectHalfplane}

\noindent {\bf Input:} Labeled sample $S=\{((x_i,y_i),\sigma_i)\}_{i=1}^n\in(X^2_d\times\{\pm1\})^n$, privacy parameter $\eps$, quality function $q:\halfplane_d\rightarrow\R$.

\begin{enumerate}[leftmargin=15pt,rightmargin=10pt,itemsep=1pt]

\item Denote $D=\left[-2d^2,2d^2\right]$ and $F=\left[-2d^2,6d^2\right]$. We will refer to the axes of $D^2$ and of $F\times D$ as $a$ and $b$.
\item Identify every example $((x,y),\sigma)\in S$ with the line $\ell_{x,y}$ in $D^2$ defined by the equation $y=xa+b$, where $a,b$ are the variables and $x,y$ are the coefficients. Denote $S_{\rm dual}=\{\ell_{x,y} : ((x,y),\sigma)\in S\}$.
\item Let $R=\{r^1_1,r^1_2,\dots,r^1_{|R|}\}$ denote the partition of $D^2$ into regions defined by the lines in $S_{\rm dual}$. Also let $R'=\{r^2_1,\dots,r^2_{|R|}\}$ be a partition of $[2d^2,6d^2]\times D$ identical to $R$ except that it is shifted by $4d^2$ on the $a$ axis. Denote $\hat{R}=R\cup R'$.\\
\gray{\% Note that, by induction, $n$ lines can divide the plane into at most $n^2$ different regions. Hence, $|R|$ is small.}
\item For every $1\leq i\leq |R|$, let $w_i$ denote the area of region $r^1_i$ (which is the same as the area of $r^2_i$), and let $(a^1_i,b^1_i)\in r^1_i$ and $(a^2_i,b^2_i)\in r^2_i$ be arbitrary points in these regions.
\item Denote $N=\sum_{r_i^j\in\hat{R}}w_i\cdot\exp(\eps\cdot q(f_{a^j_i,b^j_i}))$, where $f_{a^j_i,b^j_i}$ is a halfplane in $\halfplane_d$. 
\item Choose and return a pair $(\hat{a},b)\in [-2d^2,6d^2]\times[-2d^2,2d^2]$ with probability density function 
 $p(\hat{a},b)=\frac{1}{N}\cdot\exp(\eps\cdot q(f_{\hat{a},b}))$.\\
\gray{\% Note that for every $(a,b),(a',b')\in r^j_i$ in the same region we have $q(f_{a,b})=q(f_{a',b'})$ (see Observation~\ref{obs:sameRegion}). Hence, this step can be implemented by first selecting a region $r^j_i\in\hat{R}$ with probability proportional to $w_i\cdot\exp(\eps\cdot q(f_{a^j_i,b^j_i}))$, and then selecting a random $(a,b)\in r^j_i$ uniformly.}
\end{enumerate}
\end{algorithm*}

\paragraph{Privacy analysis of \texttt{SelectHalfplane}.}
Consider running algorithm \texttt{SelectHalfplane} with a score function $q$ whose sensitivity is (at most) 1, and observe that, as in the standard analysis of the exponential mechanism~\cite{MT07}, algorithm \texttt{SelectHalfplane} satisfies $2\eps$-differential privacy. To see this, fix two neighboring databases $S,S'$, and denote the probability density functions in the execution on $S$ and on $S'$ as $p_S(\hat{a},b)$ and $p_{S'}(\hat{a},b)$, respectively. Since $q$ is of sensitivity 1, for every $(\hat{a},b)\in [-2d^2,6d^2]\times[-2d^2,2d^2]$ we have that $p_{S}(\hat{a},b)\leq e^{2\eps} p_{S'}(\hat{a},b)$. Hence, for any set of possible outcomes $F$ we have
$\Pr[\texttt{SelectHalfplane}(S)\in F]\leq e^{2\eps}\cdot \Pr[\texttt{SelectHalfplane}(S')\in F]$, as required. Moreover, a similar analysis to that of Claim~\ref{claim:SetCoverLearnerPrivacy} shows the following.
\begin{claim}\label{claim:SelectHalfplanePrivacy}
When instantiating algorithm \texttt{SetCoverLearner} with \texttt{SelectHalfplane} as the selection procedure, in order for the whole execution to satisfy $(\eps,\delta)$-differential privacy, it suffices to execute each instance of \texttt{SelectHalfplane} with a privacy parameter $\hat{\eps}=O\left(\eps/\log(1/\delta)\right)$.
\end{claim}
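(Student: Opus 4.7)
The plan is to adapt the analysis of Claim~\ref{claim:SetCoverLearnerPrivacy} mutatis mutandis, viewing \texttt{SelectHalfplane} as a continuous analogue of the exponential mechanism whose density $p_{S,j,w_j}(\hat a, b)\propto \exp(\hat\eps\cdot q_{j,S,w_j}(f_{\hat a,b}))$ is piecewise constant on the regions of $\hat R$. The overall strategy of Claim~\ref{claim:SetCoverLearnerPrivacy} -- namely, that after the first iteration $t$ in which the chosen hypothesis labels the differing example $x^*$ as $0$, the two executions become identical -- will again be the driver.

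First I would fix neighboring databases $S,S'$ with $S'=S\cup\{(x^*,\sigma^*)\}$ and a candidate output sequence $\vec h=(h_1,\dots,h_{2k\log(2/\alpha)})$, and let $t$ be the smallest index with $h_t(x^*)=0$; as before, after iteration $t$ the example $(x^*,\sigma^*)$ has been deleted and the two runs couple perfectly. The ratio of densities for $\vec h$ under $S$ and under $S'$ then telescopes into a single ``quality'' factor at step $t$ times a product over $j\le t$ of ratios of normalization integrals.

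For Case (a) ($\sigma^*=1$) I would argue, exactly as in Claim~\ref{claim:SetCoverLearnerPrivacy}, that each quality score satisfies $q_{j,S',w_j}(f)\in[q_{j,S,w_j}(f)-1,\,q_{j,S,w_j}(f)]$ (since adding a positive example can shrink $-\#_{h\to 0}(S^1)$ by at most one and cannot increase it), giving a density ratio of at most $e^{\hat\eps}$ everywhere. For Case (b) ($\sigma^*=0$) I would use the noise coupling $w'_j=w_j-1$ for $j\le t$ and $w'_j=w_j$ for $j>t$: this cancels the effect of adding $(x^*,0)$ to $|S^0|$ inside $b_j$, so that $q_{j,S',w'_j}(f)$ equals $q_{j,S,w_j}(f)$ for halfplanes with $f(x^*)=1$ and equals $q_{j,S,w_j}(f)+1$ for halfplanes with $f(x^*)=0$. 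The density ratio then reduces to $\prod_{j=1}^{t}\bigl(1+(e^{\hat\eps}-1)\, p_j(S,w_j)\bigr)$, where
\[
p_j(S,w_j)=\int_{\{(\hat a,b)\,:\, f_{\hat a,b}(x^*)=0\}} p_{S,j,w_j}(\hat a,b)\, d\hat a\, db
\]
is the probability that the halfplane selected at step $j$ labels $x^*$ as $0$. Splitting outputs into $\ln(1/\delta)$-good and $\ln(1/\delta)$-bad ones and invoking Lemma~\ref{lemma:GuptaRandom} exactly as in Claim~\ref{claim:SetCoverLearnerPrivacy} then yields $(\eps,\delta)$-differential privacy for $\hat\eps=O(\eps/\log(1/\delta))$, with an extra $e^{2\hat\eps}$ Laplace-distortion factor from the noise shift absorbed into the constant.

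The main obstacle is purely notational: one must verify that the continuous, piecewise-constant density $p_{S,j,w_j}$ can be substituted throughout for the discrete exponential-mechanism probabilities used in Claim~\ref{claim:SetCoverLearnerPrivacy}. Once one notes that $q_{j,S,w_j}(f_{\hat a,b})$ is constant on each region $r^j_i$, so that the normalizer decomposes as the finite sum $\sum_{i,j} w_i \exp(\hat\eps\cdot q(f_{a^j_i,b^j_i}))$, every algebraic manipulation in the previous claim lifts to the continuous setting: adding the single example $(x^*,\sigma^*)$ only further subdivides regions in $\hat R$ without changing the value of $q$ inside each subregion, so all ratio computations proceed identically.
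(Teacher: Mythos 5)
Your proposal is correct and matches the approach the paper has in mind: the paper offers no explicit proof of Claim~\ref{claim:SelectHalfplanePrivacy}, deferring entirely to ``a similar analysis to that of Claim~\ref{claim:SetCoverLearnerPrivacy},'' and your write-up carries out that adaptation faithfully, including the key observation that the density is piecewise constant on the regions of $\hat R$ and that adding the differing example only refines those regions without changing $q$ on each cell, so the discrete ratio manipulations lift verbatim to integrals.
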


\subsection{Utility analysis of \texttt{SelectHalfplane}}

In algorithm \texttt{SelectHalfplane} we identify points in $X^2_d$ with lines in $D^2$ and vice verse. The following observation states that if two points in $D^2$ belong to the same region (as defined in Step~3) then these two points correspond to halfplanes in $X^2_d$ that agree on every point in the input sample $S$. This allows us to partition the halfplanes (in the primal plane) into a small number of equivalence classes.

\begin{observation}\label{obs:sameRegion}
Consider the execution of \texttt{SelectHalfplane} on a sample $S$, and let $\hat{R}=\{r^j_i\}$ be the regions defined in Step~3 (for $j\in\{1,2\}$). For every region $r^j_i\in \hat{R}$, for every two points in this region $(a_1,b_1),(a_2,b_2)\in r^j_i$, and for every example $(x,y)$ in the sample $S$ we have $f_{a_1,b_1}(x,y)=f_{a_2,b_2}(x,y)$.
\end{observation}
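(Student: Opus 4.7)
The plan is to exploit the standard point-line duality implicit in the algorithm: a point $(a,b)$ in the dual plane and a sample point $(x,y)$ in the primal plane are related through the linear expression $y - xa - b$, whose sign governs both (i)~which side of the dual line $\ell_{x,y}$ the point $(a,b)$ lies on, and (ii)~whether the primal halfplane $f_{a,b}$ (with $z=+1$) labels $(x,y)$ as $1$ or $0$. Because the partition $R$ is generated by exactly the lines $\ell_{x,y}$ for $((x,y),\sigma)\in S$, the sign of $y - xa - b$ is determined by the region $r^1_i$ containing $(a,b)$, independent of the particular representative. Handling the shifted partition $R'$ will require an analogous but sign-flipped calculation.

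First, I would fix a sample point $((x,y),\sigma)\in S$ and consider the case $j=1$, i.e.\ $(a_1,b_1),(a_2,b_2)\in r^1_i\subseteq D^2$. By the encoding in the definition of $\halfplane_d$, since $\hat{a}=a\in[-2d^2,2d^2]$, we have $z=+1$, so $f_{a,b}(x,y)=1$ iff $y\geq ax+b$, equivalently iff $y-xa-b\geq 0$. The dual line $\ell_{x,y}\in S_{\rm dual}$ is precisely the zero set $\{(a,b):y-xa-b=0\}$. Since $R$ is the arrangement induced by $S_{\rm dual}$, any open region $r^1_i$ is disjoint from every $\ell_{x,y}$, so the sign of $y-xa-b$ is constant on $r^1_i$. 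Hence $f_{a_1,b_1}(x,y)=f_{a_2,b_2}(x,y)$. (For points lying on a boundary line, one can either adopt a consistent tie-breaking convention or simply note that such boundary points form a measure-zero set, irrelevant to the sampling step of \texttt{SelectHalfplane}; either way the utility analysis is unaffected.)

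Next I would handle $j=2$, where $(a_1,b_1),(a_2,b_2)\in r^2_i\subseteq[2d^2,6d^2]\times D$. By construction $R'$ is the translate of $R$ by $4d^2$ along the $a$-axis, so the points $(a_1-4d^2,b_1)$ and $(a_2-4d^2,b_2)$ lie in the common region $r^1_i$ of $R$. Now the encoding of $\halfplane_d$ gives $a=\hat{a}-4d^2$ and $z=-1$ for these points, so
\[
f_{a_1,b_1}(x,y)=1 \iff -y\geq -((a_1-4d^2)x+b_1) \iff (a_1-4d^2)x+b_1-y\geq 0,
\]
and similarly for $(a_2,b_2)$. Setting $a':=a-4d^2$, the relevant expression is $xa'+b-y$, whose zero set is again the dual line $\ell_{x,y}$. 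Since $(a_1-4d^2,b_1)$ and $(a_2-4d^2,b_2)$ lie in the same region $r^1_i$ of the arrangement, the sign of $xa'+b-y$ is constant, and therefore $f_{a_1,b_1}(x,y)=f_{a_2,b_2}(x,y)$.

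I do not foresee a serious obstacle here; the only slightly delicate point is to keep the encoding $(\hat{a},b)\mapsto(a,z)$ straight and to verify that the shift by $4d^2$ used to define $R'$ is exactly compensated by the shift inside the $z=-1$ branch of the definition of $f_{\hat{a},b}$, so that the same dual arrangement controls both halves of $\hat{R}$. Once this bookkeeping is done the observation reduces, in both cases, to the elementary fact that an arrangement of lines partitions the plane into cells of constant sign pattern.
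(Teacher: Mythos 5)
Your proof is correct and follows essentially the same route as the paper's: in both, the key point is that the arrangement $R$ of dual lines $\ell_{x,y}$ forces the sign of $y-xa-b$ to be constant on each cell, and the shifted copy $R'$ inherits this after subtracting $4d^2$ from $\hat a$. You are in fact somewhat more careful than the paper's one-line argument, explicitly tracking the $z=\pm1$ encoding and verifying that the $4d^2$ shift in the definition of $R'$ is compensated by the shift in the $z=-1$ branch of $f_{\hat a,b}$; the paper treats this implicitly.
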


\begin{proof}
Fix two points $(a_1,b_1),(a_2,b_2)$ that belong to the same region in $\hat{R}$. By the definition of the regions in $\hat{R}$, for every example $(x,y)$ in the sample $S$ we have that
$$y\geq a_1 x+b_1 \qquad \text{iff} \qquad y\geq a_2 x+b_2,$$
and hence, $f_{a_1,b_1}(x,y)=1$ iff $f_{a_2,b_2}(x,y)=1$.
\end{proof}

In particular, Observation~\ref{obs:sameRegion} shows that the function $p$ defined in Step~6 indeed defines a probability density function, as for $F=[-2d^2,6d^2]$ and $D=[-2d^2,2d^2]$ we have
\begin{eqnarray*}
\int_{F\times D}{p(a,b)}\;{\rm d}^2(a,b) &=& \sum_{r_i^j\in \hat{R}} \int_{r^j_i}{p(a,b)}\;{\rm d}^2(a,b)
= \sum_{r^j_i\in \hat{R}} \int_{r^j_i}{\frac{\exp(\eps\cdot q(f_{a,b}))}{N}}\;{\rm d}^2(a,b)\\
&=& \sum_{r^j_i\in \hat{R}} \int_{r^j_i}{\frac{\exp(\eps\cdot q(f_{a^j_i,b^j_i}))}{N}}\;{\rm d}^2(a,b)
= \sum_{r^j_i\in \hat{R}} \frac{\exp(\eps\cdot q(f_{a^j_i,b^j_i}))}{N}\int_{r^j_i}{1}\;{\rm d}^2(a,b)\\
&=& \sum_{r^j_i\in \hat{R}} \frac{w_i\cdot\exp(\eps\cdot q(f_{a^j_i,b^j_i}))}{N}=1.
\end{eqnarray*}

We also need to argue about the {\em area} of the region in the dual plane that corresponds to hypotheses with high quality (as the probability of a choosing a hypotheses from that region is proportional to its area). This is done in the following claim.

\begin{claim}
Consider the execution of \texttt{SelectHalfplane} on a sample $S$, and let $w_1,\dots,w_{|R|}$ denote the areas of the regions defined in Step~3. Then for every $i$ we have that $w_i\geq d^{-4}/4$.
\end{claim}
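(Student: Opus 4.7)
The lines in $S_{\mathrm{dual}}$ have the form $b = -xa + y$ where $x \in \{0,1,\ldots,d\}$ and $y \in \{0,1,\ldots,d\}$, so each such line has integer slope and integer intercept. I would leverage this integer structure directly. Every vertex of the convex polygon $r_i^1$ is one of: (i) an intersection of two lines from $S_{\mathrm{dual}}$; (ii) an intersection of a line with $\partial D^2$; or (iii) a corner of $D^2$. A type-(i) vertex arising from the lines indexed by $(x_1,y_1)$ and $(x_2,y_2)$ with $x_1\neq x_2$ has coordinates $\bigl(\tfrac{y_1-y_2}{x_2-x_1},\,\tfrac{x_2 y_1 - x_1 y_2}{x_2-x_1}\bigr)$, a pair of rationals sharing denominator $|x_2-x_1|\in\{1,\ldots,d\}$.

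My plan is a case split on the number of type-(i) vertices of $r_i^1$. In the main case, $r_i^1$ has at least three type-(i) vertices $v_1,v_2,v_3$; any three distinct vertices of a convex polygon are non-collinear, so they form a non-degenerate triangle inscribed in $r_i^1$, which lower bounds its area. Writing $v_j = (N_{aj}/D_j,\,N_{bj}/D_j)$ with integer numerators and $1 \le D_j \le d$, the standard determinant formula gives
$$\mathrm{area}(\triangle v_1 v_2 v_3) \;=\; \frac{\bigl|\,N_{a1}(N_{b2}D_3 - N_{b3}D_2) + N_{a2}(N_{b3}D_1 - N_{b1}D_3) + N_{a3}(N_{b1}D_2 - N_{b2}D_1)\,\bigr|}{2\,D_1 D_2 D_3}.$$
The numerator is a nonzero integer (by non-collinearity), hence at least $1$, while the denominator is at most $d^3$. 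So $\mathrm{area}(r_i^1)\geq 1/(2d^3)\geq 1/(4d^4)$ for all $d\geq 1$.

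The remaining case, where $r_i^1$ has at most two type-(i) vertices, would need a separate argument. The key observation is that any two non-parallel lines from $S_{\mathrm{dual}}$ intersect inside $D^2$: their intersection has $a$-coordinate $(y_1-y_2)/(x_2-x_1)$ of absolute value at most $d$, well inside the half-width $2d^2$ of $D^2$. Consequently, a region with fewer than three type-(i) vertices must be ``pushed against'' $\partial D^2$: either it is a strip between parallel lines of some fixed slope $-x$ (giving perpendicular width at least $1/\sqrt{1+x^2}$ and length through $D^2$ at least $4d^2\sqrt{1+x^2}/\max(x,1)$, hence area $\Omega(d)$), or it incorporates several segments of $\partial D^2$ into its boundary, which forces its area to be at least a constant. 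Either outcome dwarfs $1/(4d^4)$.

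The main obstacle will be the case analysis in the second case, in particular ruling out the configuration where two non-parallel bounding lines come close together near $\partial D^2$ with their true intersection outside $r_i^1$. Here I would use the combinatorial identities $2k = 2t_1 + t_2$ and $2m = t_2 + 2t_3$ (relating the numbers of line-edges $k$, boundary-edges $m$, and type-(i)/(ii)/(iii) vertices) together with the above observation about the location of line intersections, to reduce to a handful of configurations, each easily seen to have area far exceeding $1/(4d^4)$.
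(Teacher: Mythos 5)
Your approach is genuinely different from the paper's, and in your main case it is actually more robust. The paper lower-bounds the pairwise distance between any two type-(i) vertices by $1/d^2$ and then asserts that ``the minimal possible area is that of an equilateral triangle with edge length $1/d^2$.'' That deduction is not valid as a general geometric fact: a triangle with vertices $(0,0)$, $(2\delta,0)$, $(\delta,\epsilon)$ has all pairwise distances at least $\delta$ yet area $\delta\epsilon$, which can be made arbitrarily small. What actually saves the claim is precisely the integrality structure you exploit, not the equilateral-triangle heuristic. Your determinant argument uses that each type-(i) vertex is a rational pair with common denominator in $\{1,\dots,d\}$, so the (nonzero) integer numerator of the $3\times 3$ determinant is at least $1$, giving area $\ge 1/(2d^3) \ge 1/(4d^4)$. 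This is cleaner, tighter, and airtight in the case of three type-(i) vertices. The paper instead records a distance bound and leaps to the area bound; your route makes the real mechanism (bounded denominators) explicit.

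Both your proof and the paper's leave the same residual hole: neither cleanly handles cells whose small size might be forced by $\partial D^2$, i.e.\ cells with fewer than three type-(i) vertices; the paper silently ignores the case, and you explicitly flag it and sketch a plan. Your key observation there -- that every intersection of two non-parallel lines of $S_{\rm dual}$ has $|a| \le d$, so all the ``congestion'' is in a narrow vertical strip far from the sides $a=\pm 2d^2$ -- is the right idea, and the combinatorial bookkeeping you propose ($2k = 2t_1 + t_2$, $2m = t_2 + 2t_3$) is a reasonable way to enumerate the remaining shapes. As written it is a plan rather than a proof, but it is a plausible one and targets a gap that the paper's own proof also does not close. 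Overall: same claim, different and arguably sounder core argument for the generic case, same unfinished edge case.
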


\begin{proof}
We will show that every two different vertices of the regions in $R$ are at distance at least $1/d^2$, and hence, the minimal possible area is that of a equilateral triangle with edge length $1/d^2$, which has area $\frac{\sqrt{3}}{4d^4}$.

To show this lower bound on the distance between a pair of vertices, let $\ell_{x_1,y_1},\ell_{x_2,y_2},\ell_{x_3,y_3},\ell_{x_4,y_4}$ be 4 lines in $S_{\rm dual}$, and assume that $\ell_{x_1,y_1}$ and $\ell_{x_2,y_2}$ intersect at $(a_{1,2},b_{1,2})$, and that $\ell_{x_3,y_3}$ and $\ell_{x_4,y_4}$ intersect at $(a_{3,4},b_{3,4})$. Moreover, assume that these two intersection points are different. We can write the coordinates of these intersection points as
$$a_{1,2}=\frac{y_1-y_2}{x_1-x_2}, \qquad b_{1,2}=y_1-x_1\cdot\frac{y_1-y_2}{x_1-x_1},$$
$$a_{3,4}=\frac{y_3-y_4}{x_3-x_4}, \qquad b_{3,4}=y_3-x_3\cdot\frac{y_3-y_4}{x_3-x_4}.$$
Now if $a_{1,2}\neq a_{3,4}$, then
\begin{eqnarray*}
\|(a_{1,2},b_{1,2})-(a_{3,4},b_{3,4})\|_2 &\geq&
\left|a_{1,2}-a_{3,4}\right|=
\left|\frac{y_1-y_2}{x_1-x_2} - \frac{y_3-y_4}{x_3-x_4} \right|\\
&=& \left| \frac{(y_1-y_2)(x_3-x_4)-(y_3-y_4)(x_1-x_2)}{(x_1-x_2)(x_3-x_4)} \right|\geq\frac{1}{d^2},
\end{eqnarray*}
and if $a_{1,2}=a_{3,4}$, then 
$$\|(a_{1,2},b_{1,2})-(a_{3,4},b_{3,4})\|_2\geq|b_{1,2}-b_{3,4}|=|y_1-y_3- a_{1,2}(x_1-x_3)|\geq\frac{1}{d}.$$
\end{proof}

The following lemma states the utility guarantees of \texttt{SelectHalfplane}.
\begin{lemma}\label{lem:SelectHalfplane}
Consider the execution of \texttt{SelectHalfplane} on a sample $S$, and assume that there exists a hypothesis $f\in\halfplane_d$ with $q(f)\geq \lambda$. Then the probability that \texttt{SelectHalfplane} outputs a hypothesis $f'$ with $q(f')<\lambda-\frac{8}{\eps}\ln(\frac{2d}{\beta})$ is at most $\beta$.
\end{lemma}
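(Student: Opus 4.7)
The plan is to carry out the standard utility analysis of the exponential mechanism, but in the continuous dual-plane setting of \texttt{SelectHalfplane}. The key structural observation is that, because $q(h)$ depends only on how $h$ labels the examples in $S$, Observation~\ref{obs:sameRegion} implies that the density $p(\hat a,b)=\exp(\eps\cdot q(f_{\hat a,b}))/N$ is constant on each region $r^j_i\in\hat{R}$. Consequently, sampling according to $p$ behaves exactly like the discrete exponential mechanism run over the family $\hat{R}$, where each region $r^j_i$ plays the role of a single candidate with unnormalized weight $w_i\cdot\exp(\eps\cdot q(f_{a^j_i,b^j_i}))$.

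With this reduction in hand, I would first lower bound the normalizer $N$. The hypothesis $f$ guaranteed by the assumption corresponds to some point in $F\times D$ that lies in some region $r^{j^*}_{i^*}\in\hat{R}$; by Observation~\ref{obs:sameRegion} every representative of that region has quality at least $\lambda$, and by the minimum-area claim established immediately before the lemma its area $w_{i^*}$ is at least $d^{-4}/4$. This yields $N\geq (d^{-4}/4)\cdot\exp(\eps\lambda)$. Next I would upper bound the probability of outputting a ``bad'' hypothesis $f'$ with $q(f')<\lambda-t$. The bad set is a union of regions contained in the sampling domain $F\times D=[-2d^2,6d^2]\times[-2d^2,2d^2]$ whose total area is $8d^2\cdot 4d^2=32 d^4$, and on that set the density is at most $\exp(\eps(\lambda-t))/N$. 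Dividing these two bounds gives $\Pr[q(f')<\lambda-t]\leq 128\, d^8\cdot\exp(-\eps t)$, and plugging in $t=\tfrac{8}{\eps}\ln(2d/\beta)$ makes the right-hand side at most $\beta$.

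The part that I would expect to be the real point of the proof is not any single calculation, but rather the justification that the whole analysis legitimately reduces to a discrete exponential-mechanism-style comparison over $\hat{R}$: that is exactly what Observation~\ref{obs:sameRegion} and the minimum-region-area claim together buy us. Once those two facts are in place, the constant $8$ in the slack $\tfrac{8}{\eps}\ln(2d/\beta)$ just reflects the logarithm of the ratio between the maximum sampling area ($\Theta(d^4)$) and the minimum region area ($\Theta(d^{-4})$), while the factor $\tfrac{1}{\eps}$ (rather than the usual $\tfrac{2}{\eps}$) is consistent with the density using $\exp(\eps\cdot q)$ instead of $\exp(\tfrac{\eps}{2}\cdot q)$.
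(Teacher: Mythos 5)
Your proposal matches the paper's own proof essentially line for line: both use Observation~\ref{obs:sameRegion} to reduce to a discrete exponential-mechanism comparison over the regions of $\hat R$, lower bound $N$ by $(d^{-4}/4)\exp(\eps\lambda)$ via the minimum-area claim, and upper bound the bad-output probability by $\frac{32 d^4 \cdot \exp(\eps\lambda - 8\ln(2d/\beta))}{N} = 128d^8\exp(-8\ln(2d/\beta)) \le \beta$. Your closing remark about the $1/\eps$ (rather than $2/\eps$) factor coming from the use of $\exp(\eps\cdot q)$ in the density is also exactly right.
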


\begin{proof}
Denote $F=[-2d^2,6d^2]$ and $D=[-2d^2,2d^2]$.
Let $\hat{R}=\{r^1_1,r^2_1,\dots,r^1_{|R|},r^2_{|R|}\}$ denote the regions defined in Step~3, and let $B\subseteq \hat{R}$ denote the subset of all regions s.t.\ the halfplanes that correspond to points in these regions have quality less than $\lambda-\frac{8}{\eps}\ln(\frac{2d}{\beta})$. Then the probability that \texttt{SelectHalfplane} outputs a hypothesis $f'$ with $q(f')<\lambda-\frac{8}{\eps}\ln(\frac{2d}{\beta})$ is at most
\begin{eqnarray*}
\sum_{r\in B}\int_r{p(a,b)}\;{\rm d}^2(a,b) 
&\leq& \sum_{r\in B}\int_r{\frac{\exp(\eps \lambda-8\ln(\frac{2d}{\beta}))}{N}}\;{\rm d}^2(a,b)\\
&\leq& \frac{\exp(\eps \lambda-8\ln(\frac{2d}{\beta}))\cdot{\rm area}\left(F\times D\right)}{N} = \frac{\exp(\eps\lambda-8\ln(\frac{2d}{\beta}))\cdot 32 d^4}{N}\\
&\leq& \frac{\exp(\eps \lambda-8\ln(\frac{2d}{\beta}))\cdot 32 d^4}{1/(4d^4)\cdot\exp(\eps \lambda)}
=128 d^8 \exp(- 8\ln(2d/\beta)) \leq\beta.
\end{eqnarray*}
\end{proof}

Combining Lemma~\ref{lem:SelectHalfplane} with Claims~\ref{claim:smallEmpiricalError} and~\ref{claim:SelectHalfplanePrivacy} yields our private learners for convex polygons:

\begin{theorem}
There exists an efficient $(\eps,\delta)$-differentially private $(\alpha,\beta)$-PAC learner for $\ckgon$ over examples from $X_d^2$ with sample complexity 
$$O\left(\frac{k}{\alpha\epsilon}   \log\left(\frac{1}{\alpha}\right)   \log\left(\frac{1}{\delta}\right) \log\left(\frac{dk}{\beta}\log\frac{1}{\alpha}\right)   \right).$$
\end{theorem}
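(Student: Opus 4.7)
The plan is to instantiate the generic \texttt{SetCoverLearner} framework of Theorem~\ref{thm:C_H_k} with $H=\halfplane_d$ as the base class and with \texttt{SelectHalfplane} as the selection procedure, and then to plug in the parameters provided by Claim~\ref{claim:SelectHalfplanePrivacy} and Lemma~\ref{lem:SelectHalfplane}. First, by Observation~\ref{obs:halfplane}, every halfplane over $\R^2$ is equivalent on $X_d^2$ to some halfplane in $\halfplane_d$; hence every $c\in\ckgon=\halfplane^{\wedge k}$, when restricted to $X_d^2$, lies in $\halfplane_d^{\wedge k}$. It therefore suffices to learn $\halfplane_d^{\wedge k}$ on samples drawn from $X_d^2$, and this falls directly under the hypothesis of Theorem~\ref{thm:C_H_k} with $C=\ckgon$ and $H=\halfplane_d$. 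We also note that $\VC(\halfplane_d)=O(1)$ (it is essentially the class of halfplanes), so the $\VC(H)\log k$ term in Theorem~\ref{thm:C_H_k} contributes only $O(\log k)$.

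For privacy, the quality function $q$ defined in Step~1d of \texttt{SetCoverLearner} has sensitivity $O(1)$: changing one example in $S$ perturbs $\#_{h\rightarrow 0}(S^0)$, $\#_{h\rightarrow 0}(S^1)$, and $b_j$ by at most $1$ each (the $b_j/k$ term only changes by $1/k$). Running \texttt{SelectHalfplane} with privacy parameter $\hat\eps=\Theta(\eps/\log(1/\delta))$ therefore satisfies the hypothesis of Claim~\ref{claim:SelectHalfplanePrivacy}, which guarantees that the overall composed algorithm is $(\eps,\delta)$-differentially private. (The proof of Claim~\ref{claim:SelectHalfplanePrivacy} is the continuous analogue of that of Claim~\ref{claim:SetCoverLearnerPrivacy}: because the only ``price'' per iteration is paid when the selected hypothesis labels the differing point $x^*$ as $0$, one uses the same $\ln(1/\delta)$-good/bad dichotomy together with Lemma~\ref{lemma:GuptaRandom}, replacing sums over $H$ by integrals over the $(\hat a,b)$-rectangle.)

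For utility, apply Lemma~\ref{lem:SelectHalfplane} at each of the $2k\log(2/\alpha)$ iterations with failure probability $\beta'=\beta/\bigl(4k\log(2/\alpha)\bigr)$; this gives that, with overall probability at least $1-\beta/2$, every invocation of \texttt{SelectHalfplane} returns a halfplane whose quality is within
\[
\lambda \;=\; O\!\left(\frac{1}{\hat\eps}\log\!\frac{d}{\beta'}\right) \;=\; O\!\left(\frac{\log(1/\delta)}{\eps}\log\!\frac{dk\log(1/\alpha)}{\beta}\right)
\]
of the optimum. Plugging this $\lambda$, together with $\VC(H)=O(1)$, into the sample-complexity bound of Theorem~\ref{thm:C_H_k} and simplifying yields
\[
n \;=\; O\!\left(\frac{k\,\log(1/\alpha)}{\alpha}\cdot\frac{\log(1/\delta)}{\eps}\cdot\log\!\frac{dk\log(1/\alpha)}{\beta}\right),
\]
which matches the claimed bound.

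The main work in this proof is not the parameter bookkeeping, which is routine once the three pieces are in place, but rather verifying that Claim~\ref{claim:SetCoverLearnerPrivacy} really does carry over to the continuous selection procedure \texttt{SelectHalfplane}. The subtlety is that one must justify replacing the finite sum over $H$ in the exponential-mechanism density by the integral over the $(\hat a,b)$-rectangle; Observation~\ref{obs:sameRegion} (which shows that the quality is constant on each of the $O(n^2)$ regions of the dual arrangement) is precisely what makes this integral well-defined and the per-iteration ratio analysis identical in form to the discrete case. Once this is checked, the same shift-by-$1$ argument on the noises $\vec w$, the same $\ln(1/\delta)$-good/bad split, and the same appeal to Lemma~\ref{lemma:GuptaRandom} go through verbatim, delivering $(\eps,\delta)$-privacy and completing the proof.
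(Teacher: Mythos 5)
Your proof is correct and follows essentially the same route the paper intends: reduce $\ckgon$ to $\halfplane_d^{\wedge k}$ via Observation~\ref{obs:halfplane}, instantiate \texttt{SetCoverLearner} with \texttt{SelectHalfplane}, use Claim~\ref{claim:SelectHalfplanePrivacy} for privacy and Lemma~\ref{lem:SelectHalfplane} for per-iteration utility, and plug the resulting $\lambda$ and $\VC(\halfplane_d)=O(1)$ into Theorem~\ref{thm:C_H_k}. The paper's own (one-line) proof is exactly ``combine Lemma~\ref{lem:SelectHalfplane} with Claims~\ref{claim:smallEmpiricalError} and~\ref{claim:SelectHalfplanePrivacy},'' which is the same calculation you carry out, and your additional remark about Observation~\ref{obs:sameRegion} making the continuous exponential-mechanism density well-defined is a reasonable elaboration of what the paper leaves implicit.
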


\section{Extension to Union of Non-Convex Polygons}
In this section we briefly describe how our techniques from the previous sections can be used to learn the class of (simple) polygons in the plane, as defined next. For a simple and closed curve\footnote{A curve is simple and closed if it does not cross itself and ends at the same point where it begins.} $C$, we use $\interior(C)$ to denote the union of $C$ and its bounded area. We define the class of all polygons in the plane with (at most) $k$ edges as
$$
\kgon=\left\{\interior(C): \begin{array}{l}
	C \text{ is a simple and closed curve in } \R^2,\\
	\text{consisting of at most } k \text{ line segments}
\end{array}\right\}.
$$

By standard arguments in computational geometry, every such polygon with $k$ edges can be represented as the union of at most $k$ triangles, each of which can be represented as the intersection of at most 3 halfplanes (since a triangle is a convex polygon with 3 edges). Let us denote the class of all triangles in the plane as $\triang$. Hence,
$$\kgon\subseteq\triang^{\vee k}\subseteq\left(\halfplane^{\wedge3}\right)^{\vee k}.$$
Thus, in order to learn polygons with $k$ edges, it suffices to construct a learner for the class $\left(\halfplane^{\wedge3}\right)^{\vee k}$. In fact, this class captures {\em unions} of polygons with a total of at most $k$ edges.
In addition, similar arguments to those given in Section~\ref{sec:convexPolygons} show that if input examples come from $X_d^2=\{0,1,\dots,d\}^2$, then it suffices to construct a learner for $\left(\halfplane_d^{\wedge3}\right)^{\vee k}$, which we can do using our techniques from Sections~\ref{sec:GenericConstruction} and~\ref{sec:convexPolygons}. 

First, as we mentioned, a straightforward modification to algorithm \texttt{SetCoverLearner} yields an algorithm for learning classes of the form $C\subseteq H^{\vee k}$ (instead of $C\subseteq H^{\wedge k}$ as stated in Section~\ref{sec:GenericConstruction}). Now, to get an efficient construction, we need to specify the selection procedure for choosing a hypothesis $h_j\in\halfplane_d^{\wedge3}$ in each step of \texttt{SetCoverLearner}. As before, given an input sample $S$ we consider the dual plane $D^2$ s.t.\ every input example in $S$ from the primal plane corresponds to a line in the dual plane, and every point from the dual plane corresponds to a halfplane in the primal plane. Recall that in the previous section we identified a hypothesis (which was a halfplane) with a point in the dual plane. The modification is that now a hypothesis is a triangle which we identify with {\em three points} in the dual plane (these 3 points correspond to 3 halfplanes in the primal plane, whose intersection is a triangle). Our modified selection procedure is presented as algorithm \texttt{SelectTriangle}. 

We use $\kuniongon$ to denote the class of all unions of (simple) polygons with a total of at most $k$ edges. That is, every hypothesis $h\in\kuniongon$ can be written as $h=h_1\vee\dots\vee h_m$ for $(h_1,\dots,h_m)\in\left(\kigon\times\dots\times\kmgon\right)$ where $k_1+\dots+k_m\leq k$. A similar analysis to that of the previous section shows the following result.

\begin{theorem}
There exists an efficient $(\eps,\delta)$-differentially private $(\alpha,\beta)$-PAC learner for $\kuniongon$ over examples from $X_d^2$ with sample complexity 
$$O\left(\frac{k}{\alpha\epsilon}   \log\left(\frac{1}{\alpha}\right)   \log\left(\frac{1}{\delta}\right) \log\left(\frac{dk}{\beta}\log\frac{1}{\alpha}\right)   \right).$$
\end{theorem}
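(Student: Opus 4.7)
The plan is to instantiate the generic set-cover template of Section~\ref{sec:GenericConstruction} with the concept classes $H = \halfplane_d^{\wedge 3}$ (the class of triangles over the discretized plane) and $C = \kuniongon$, using the observation that every simple polygon with $k$ edges can be triangulated into at most $k$ triangles, so that $\kuniongon \subseteq H^{\vee k}$. First I would state (and verify via the same symmetric argument used for conjunctions) a disjunctive variant of algorithm \texttt{SetCoverLearner}: in each iteration we greedily pick $h_j \in \halfplane_d^{\wedge 3}$ that covers (i.e.\ labels $1$) many of the remaining positive examples without covering essentially any negative ones, and delete the covered positive examples before the next iteration. The analogues of Claim~\ref{claim:smallEmpiricalError} and Claim~\ref{claim:SetCoverLearnerPrivacy} go through by swapping the roles of $S^0$ and $S^1$, so that $O(k \log(1/\alpha))$ iterations suffice to achieve empirical error $\alpha/2$, and composition costs only a factor $\hat\eps = O(\eps/\log(1/\delta))$ per call to the selection procedure.

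Next I would design and analyze the selection procedure \texttt{SelectTriangle}. Identifying each input example $(x,y) \in X_d^2$ with its dual line $\ell_{x,y}: b = -xa + y$ in the parameter space $F \times D$ of a halfplane, the $n$ dual lines partition the parameter space into at most $O(n^2)$ regions $R = \{r_i\}$, and by Observation~\ref{obs:sameRegion} two halfplane parameters from the same region induce the same labeling on $S$. A triangle in the primal corresponds to an ordered triple of halfplanes, and therefore to a point in $(F \times D)^3$; two triples in the same ``triangle cell'' (i.e.\ the three coordinates lie in the same three halfplane regions) give the same quality score on $S$. The procedure samples a triangle with density proportional to $\exp(\hat\eps \cdot q(\cdot))$ over $(F \times D)^3$, which can be implemented efficiently by enumerating over $O(n^6)$ triangle cells, computing $q$ once per cell, and weighting by the cell's volume.

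The utility analysis of \texttt{SelectTriangle} mirrors Lemma~\ref{lem:SelectHalfplane}. The total volume of $(F \times D)^3$ is $O(d^{12})$, while each halfplane region has area at least $d^{-4}/4$ by the argument of Claim~4.2, so each triangle cell has volume at least $(d^{-4}/4)^3 = \Omega(d^{-12})$. Consequently, for any triangle of quality $\lambda$ there is a cell of volume $\Omega(d^{-12})$ all of whose triangles have quality $\lambda$, and the usual ``volume ratio'' computation yields that with probability at least $1-\beta$ the procedure returns a triangle of quality at least $\lambda - O(\frac{1}{\hat\eps} \log(d/\beta))$. The privacy analysis is identical to that sketched after algorithm \texttt{SelectHalfplane}, noting that the quality function (on $S$-dependent scores) has sensitivity $1$ while the shape of the partition is also $S$-dependent; this is handled exactly as in Claim~\ref{claim:SelectHalfplanePrivacy} and costs the $\hat\eps = O(\eps/\log(1/\delta))$ factor already budgeted.

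Combining these pieces via the disjunctive variant of Theorem~\ref{thm:C_H_k} with $\VC(H) = \VC(\halfplane_d^{\wedge 3}) = O(1)$ and $\lambda = O(\frac{1}{\hat\eps} \log(dk/\beta))$ gives the stated sample complexity. The main obstacle I anticipate is the privacy analysis of \texttt{SelectTriangle}: naively, the partition of the parameter space depends on the data, so the ``region volumes'' change between neighboring databases. The fix is to note that on neighboring inputs, adding or removing one example only splits/merges each region by a single line, so pointwise the density ratio $p_S / p_{S'}$ is still bounded by $e^{2\hat\eps}$ (since the quality function has sensitivity $1$), and the density-ratio argument that gave $2\eps$-DP for \texttt{SelectHalfplane} carries over verbatim to triangles; once that is established, the rest of the proof is bookkeeping.
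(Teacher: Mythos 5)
Your proposal is correct and follows essentially the same route as the paper: instantiate the (disjunctive variant of the) set-cover template with $H=\halfplane_d^{\wedge 3}$ using $\kuniongon\subseteq(\halfplane_d^{\wedge 3})^{\vee k}$, run \texttt{SelectTriangle} over the dual arrangement where a triangle is a triple of dual points, and bound utility via a volume-ratio argument (total volume $O(d^{12})$ vs.\ cell volume $\Omega(d^{-12})$) exactly as in \lemref{SelectHalfplane}. The paper states this theorem with only a sketch (``a similar analysis to that of the previous section shows''), and you correctly identify and resolve the only subtle point it leaves implicit---that the data-dependent partition does not hurt privacy because the density ratio bound is pointwise, so the argument carries over verbatim.
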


\begin{algorithm*}[t]

\caption{\bf\texttt{SelectTriangle}}\label{alg:SelectTriangle}

\noindent {\bf Input:} Labeled sample $S=\{((x_i,y_i),\sigma_i)\}_{i=1}^n\in(X^2_d\times\{\pm1\})^n$, privacy parameter $\eps$, quality function $q:\halfplane_d^{\wedge3}\rightarrow\R$.

\begin{enumerate}[leftmargin=15pt,rightmargin=10pt,itemsep=1pt]

\item Denote $D=\left[-2d^2,2d^2\right]$ and $F=\left[-2d^2,6d^2\right]$. We will refer to the axes of $D^2$ and of $F\times D$ as $a$ and $b$.
\item Identify every example $((x,y),\sigma)\in S$ with the line $\ell_{x,y}$ in $D^2$ defined by the equation $y=xa+b$, where $a,b$ are the variables and $x,y$ are the coefficients. Denote $S_{\rm dual}=\{\ell_{x,y} : ((x,y),\sigma)\in S\}$.
\item Let $R=\{r^1_1,r^1_2,\dots,r^1_{|R|}\}$ denote the partition of $D^2$ into regions defined by the lines in $S_{\rm dual}$. Also let $R'=\{r^2_1,\dots,r^2_{|R|}\}$ be a partition of $\left[2d^2,6d^2\right]\times D$ identical to $R$ except that it is shifted by $4d^2$ on the $a$ axis. Denote $\hat{R}=R\cup R'$.
\item For every $1\leq i\leq |R|$, let $w_i$ denote the area of region $r^1_i$ (which is the same as the area of $r^2_i$), and let $(a^1_i,b^1_i)\in r^1_i$ and $(a^2_i,b^2_i)\in r^2_i$ be arbitrary points in these regions.
\item Denote $N=\sum_{ r^{j_1}_{i_1},r^{j_2}_{i_2},r^{j_3}_{i_3}\in\hat{R} }w_{i_1}\cdot w_{i_2}\cdot w_{i_3}\cdot\exp(\eps\cdot q(f_{a^{j_1}_{i_1},b^{j_1}_{i_1}}\wedge f_{a^{j_2}_{i_2},b^{j_2}_{i_2}}\wedge f_{a^{j_3}_{i_3},b^{j_3}_{i_3}}))$, where $f_{a^{j_\ell}_{i_\ell},b^{j_\ell}_{i_\ell}}$ is a halfplane in $\halfplane_d$. 
\item Choose and return a random tuple $(a_1,b_1,a_2,b_2,a_3,b_3)\in (F\times D)^3$ with probability density function $p:(F\times D)^3\rightarrow\R$ defined as $p(a_1,b_1,a_2,b_2,a_3,b_3)=\frac{1}{N}\cdot\exp(\eps\cdot q(f_{a_1,b_1}\wedge f_{a_2,b_2}\wedge f_{a_3,b_3}))$.
\end{enumerate}
\end{algorithm*}

\section{Conclusion and Future Work}

In this work we presented a computationally efficient differentially private PAC learner for simple geometric concepts in the plane, which can be described as the union of polygons. Our results extend to higher dimensions by replacing lines with hyperplanes, and triangles with simplices. The running time, however, depends exponentially on the dimension. Our results also extend, via linearization, to other simple geometric concepts whose boundaries are defined by low degree polynomials, such as balls. In general, the dimension of the linearization depends on the degrees of the polynomials. This motivates the open problem of improving the dependency of the running time on the dimension of the problem. 


\bibliographystyle{abbrv}

\end{document}